\numberwithin{equation}{section}
\newcommand{\RR}[0]{\mathbb{R}}
\newcommand{\ip}[2]{\langle #1, #2 \rangle}
\newcommand{\Wcal}{\mathcal{W}}
\newcommand{\Xcal}{\mathcal{X}}
\def\l({\left(}
\def\r){\right)}
\def\bl({\Big(}
\def\br){\Big)}
\def\beq{\begin{equation}}
\def\eeq{\end{equation}}
\newtheorem{assumption}{Assumption}
\def\x{{\mathbf x}}
\def\y{{\mathbf y}}
\def\X{{\mathbf X}}
\def\Y{{\mathbf Y}}
\def\n{{\mathbf n}}
\def\L{{\cal L}}
\def\RR{{\mathbb R}}
\def\PP{{ \mathbb P }}
\def\EE{{ \mathbb E }}
\def\wt{{w_{t}}}
\def\wt1{{w_{t+1}}}
\def\at1{{a_{t+1}}}
\def\Bbf{{ \mathbf B }}
\def\hstar{{  h^\star  }}
\def\L_h{{ \mathcal{L}_h }}
\newcommand{\drm}{\mathrm{d}}
\def\nh{{  \nabla h }}
\def\n2h{{  \nabla^2 h }}
\def\nhstar{{  \nabla h^\star  }}
\def\n2hstar{{  \nabla^2 h^\star  }}
\def\dt{{\drm t}}
\def\dBt{{ \drm  \mathbf B_t }}
\def\dmu{{ \drm \mu }}
\def\dnu{{ \drm \nu }}
\def\dV{{ e^{-V(\x)}\drm \x }}
\def\dW{{ e^{-W(\y)}\drm \y }}
\def\dTV{{ d_{\textup{TV}} }}
\def\nn{{ \nonumber }}
\newcommand{\ts}{\textsuperscript}
\newtheorem{theorem}{Theorem}
\newtheorem{lemma}{Lemma}
\newtheorem{remark}{Remark}
\newtheorem{example}{Example}
\def\hstar{{ h^\star }}
\def\n{{ \nabla }}
\def\drm{{ \mathrm{d} }}
\title{Mirrored Langevin Dynamics}
\author{
  Ya-Ping Hsieh  
  \And
  Ali Kavis \\
  \And
  Paul Rolland \\
  \And
  Volkan Cevher 
  \AND 
  \normalfont{Laboratory for Information and Inference Systems (LIONS),}\\
  \normalfont{EPFL, Lausanne, Switzerland}\\
  \texttt{\{ya-ping.hsieh, ali.kavis, paul.rolland, volkan.cevher\}@epfl.ch}
}
\begin{document}

\maketitle

\begin{abstract}
We consider the problem of sampling from \emph{constrained} distributions, which has posed significant challenges to both non-asymptotic analysis and algorithmic design. We propose a unified framework, which is inspired by the classical mirror descent, to derive novel first-order sampling schemes. We prove that, for a general target distribution with strongly convex potential, our framework implies the existence of a first-order algorithm achieving $\tilde{O}(\epsilon^{-2}d)$ convergence, suggesting that the state-of-the-art $\tilde{O}(\epsilon^{-6}d^5)$ can be vastly improved. With the important Latent Dirichlet Allocation (LDA) application in mind, we specialize our algorithm to sample from Dirichlet posteriors, and derive the first non-asymptotic $\tilde{O}(\epsilon^{-2}d^2)$ rate for first-order sampling. We further extend our framework to the mini-batch setting and prove convergence rates when only stochastic gradients are available. Finally, we report promising experimental results for LDA on real datasets.


\if 0
We generalize the Langevin Monte Carlo algorithm through the mirror descent framework for first-order sampling. Our motivation is to tackle \emph{constrained} sampling problems, which are ubiquitous in machine learning, but have also posed significant challenges to both theoretical analysis and practical algorithm design. In this paper, we demonstrate how the classical mirror map idea, when combined with theory of Optimal Transport, leads to simple algorithms with fast convergence rates. Furthermore, for certain important problems that were previously considered difficult, our framework allows to reduce them to the simple task of sampling from unconstrained log-concave distributions. Concretely, we present the first non-asymptotic $\tilde{O}(\epsilon^{-2}d^2 R_0)$ rate (where $\epsilon$ is the precision, $d$ is the dimension, and $R_0$ is the initial distance) in relative entropy for \emph{Dirichlet posteriors}, a distribution of central interest in topic modelling. For general constrained sampling problems, we prove the existence of a mirror map that sharpens the state-of-the-art $\tilde{O}(\epsilon^{-6}d^5)$ rate to $\tilde{O}(\epsilon^{-2}d)$, when the target distribution is strongly log-concave with compact support. We provide preliminary experimental evidence that our algorithm outperforms existing first-order methods for sampling from Dirichlet posteriors.
\fi

\if 0
We generalize the Langevin Dynamics through the mirror descent framework for first-order sampling. The na\"ive approach of incorporating Brownian motion into the mirror descent dynamics, which we refer to as Symmetric Mirrored Langevin Dynamics (S-MLD), is shown to connected to the theory of Weighted Hessian Manifolds. The S-MLD, unfortunately, contains the hard instance of Cox--Ingersoll--Ross processes, whose discrete-time approximation exhibits slow convergence both theoretically and empirically. We then propose a new dynamics, which we refer to as the Asymmetric Mirrored Langevin Dynamics (A-MLD), that avoids the hurdles of S-MLD. In particular, we prove that discretized A-MLD implies the existence of a first-order sampling algorithm that sharpens the state-of-the-art $\tilde{O}(\epsilon^{-6}d^5)$ rate to $\tilde{O}(\epsilon^{-2}d)$, when the target distribution is strongly log-concave with compact support. For sampling on a simplex, A-MLD can transform certain non-log-concave sampling problems into log-concave ones. As a concrete example, we derive the first non-asymptotic $\tilde{O}(\epsilon^{-2}d^2 R_0)$ rate for first-order sampling of Dirichlet posteriors, where $R_0$ is the 2-Wasserstein distance between the initial distribution and the target. 
\fi
\end{abstract}

\section{Introduction}

%
%

Many modern learning tasks involve sampling from a high-dimensional and large-scale distribution, which calls for algorithms that are scalable with respect to both the dimension and the data size. One approach \cite{welling2011bayesian} that has found wide success is to discretize the \textbf{Langevin Dynamics}:
\beq \label{eq:langevin_dynamics}
\drm\X_t = -\nabla V(\X_t)\dt + \sqrt{2} \dBt,
\eeq 
where $\dV$ presents a target distribution and $\Bbf_t$ is a $d$-dimensional Brownian motion. Such a framework has inspired numerous first-order sampling algorithms \cite{ahn2012bayesian, chen2014stochastic, ding2014bayesian, durmus2016stochastic, lan2016sampling, liu2016stochastic, patterson2013stochastic, simsekli2016stochastic}, and the convergence rates are by now well-understood for unconstrained and log-concave distributions \citep{cheng2017convergence, dalalyan2017user, durmus2018analysis}.  

However, applying \eqref{eq:langevin_dynamics} to sampling from \emph{constrained} distributions (i.e., when $V$ has a bounded convex domain) remains a difficult challenge. From the theoretical perspective, there are only two existing algorithms \cite{brosse17sampling, bubeck2015sampling} that possess non-asymptotic guarantees, and their rates are significantly worse than the unconstrained scenario under the same assumtions; \textit{cf.}, Table~\ref{tab:constrained_rates}. Furthermore, many important constrained distributions are inherently non-log-concave. A prominent instance is the \textbf{Dirichlet posterior}, which, in spite of the presence of several tailor-made first-order algorithms  \cite{lan2016sampling, patterson2013stochastic}, is still lacking a non-asymptotic guarantee.

In this paper, we aim to bridge these two gaps at the same time. For general constrained distributions with a strongly convex potential $V$, we prove the existence of a first-order algorithm that achieves the same convergence rates as if there is no constraint at all, suggesting the state-of-the-art $\tilde{O}(\epsilon^{-6}d^5)$ can be brought down to $\tilde{O}(\epsilon^{-2}d)$.
~When specialized to the important case of simplex constraint, we provide the first non-asymptotic guarantee for Dirichlet posteriors, $\tilde{O}(\epsilon^{-2}d^2R_0)$ for deterministic and $\tilde{O}\l(\epsilon^{-2}(Nd + \sigma^2)R_0\r)$ for the stochastic version of our algorithms; \textit{cf.}, \textbf{Example \ref{exp:dirichlet}} and \textbf{\ref{exp:diri_smld}} for the involved parameters.

Our framework combines ideas from the \textbf{Mirror Descent} \cite{beck2003mirror, nemirovsky1983problem} algorithm for optimization and the theory of Optimal Transport \cite{villani2008optimal}. Concretely, for constrained sampling problems, we propose to use the \emph{mirror map} to transform the target into an unconstrained distribution, whereby many existing methods apply. Optimal Transport theory then comes in handy to relate the convergence rates between the original and transformed problems. For simplex constraints, we use the entropic mirror map to design practical first-order algorithms that possess rigorous guarantees, and are amenable to mini-batch extensions.

The rest of the paper is organized as follows. We briefly review the notion of push-forward measures in Section \ref{sec:prelim}. In Section \ref{sec:amld}, we propose the \textbf{Mirrored Langevin Dynamics} and prove its convergence rates for constrained sampling problems. Mini-batch extensions are derived in Section \ref{sec:stochastic_mld}. Finally, in Section \ref{sec:experiment}, we provide synthetic and real-world experiments to demonstrate the empirical efficiency of our algorithms.
\subsection{Related Work} 
\noindent\textbf{First-Order Sampling Schemes with Langevin Dynamics:} There exists a bulk of literature on (stochastic) first-order sampling schemes derived from Langevin Dynamics or its variants \citep{ahn2012bayesian, brosse17sampling, bubeck2015sampling, chen2015convergence, cheng2017convergence, cheng2017underdamped, dalalyan2017user, durmus2018analysis, dwivedi2018log, luu2017sampling, patterson2013stochastic, welling2011bayesian}. However, to our knowledge, this work is the first to consider mirror descent extensions of the Langevin Dynamics.

The authors in \cite{ma2015complete} proposed a formalism that can, in principle, incorporate any variant of Langevin Dynamics for a given distribution $\dV$. The Mirrored Langevin Dynamics, however, is targeting the push-forward measure $\dW$ (see Section \ref{sec:amld_motivation}), and hence our framework is not covered in \cite{ma2015complete}.

For Dirichlet posteriors, there is a similar variable transformation as our entropic mirror map in \cite{patterson2013stochastic} (see the ``reduced-natural parametrization'' therein). The dynamics in \cite{patterson2013stochastic} is nonetheless drastically different from ours, as there is a position-dependent matrix multiplying the Brownian motion, whereas our dynamics has no such feature; see \eqref{eq:asymmetric_mirror_descent_dyanmics}.


\noindent\textbf{Mirror Descent-Type Dynamics for Stochastic Optimization:} Although there are some existing work on mirror descent-type dynamics for \emph{stochastic optimization} \citep{krichene2017acceleration, mertikopoulos2018convergence, raginsky2012continuous, xu2018accelerated}, we are unaware of any prior result on sampling. 





\section{Preliminaries}\label{sec:prelim}
\subsection{Notation}
In this paper, all Lipschitzness and strong convexity are with respect to the Euclidean norm $\|\cdot\|$. We use $\mathcal{C}^k$ to denote $k$-times differentiable functions with continuous $k$\ts{th} derivative. The Fenchel dual \cite{rockafellar2015convex} of a function $h$ is denoted by $\hstar$. Given two mappings $T, F$ of proper dimensions, we denote their composite map by $T\circ F$. For a probability measure $\mu$, we write $\X\sim \mu$ to mean that ``$\X$ is a random variable whose probability law is $\mu$''. 



\subsection{Push-Forward and Optimal Transport}
Let $\dmu = \dV$ be a probability measure with support $\Xcal \coloneqq \textup{dom}(V) = \{\x \in \RR^d\ |\ V(\x) < +\infty \}$, and $h$ be a convex function on $\Xcal$. Throughout the paper we assume:
\begin{assumption}\label{ass:mirror_map}
$h$ is closed, proper, $h \in \mathcal{C}^2 \text{, and } \nabla^2 h \succ 0$ on $\Xcal \subset \RR^d$.
\end{assumption}
\begin{assumption}\label{ass:distributions_moment}
All measures have finite second moments.
\end{assumption}
\begin{assumption}\label{ass:distributions_hausdorff}
All measures vanish on sets with Hausdorff dimension \cite{mandelbrot1983fractal} at most $d-1$.
\end{assumption}\vspace{-2mm}
The gradient map $\nabla h$ induces a new probability measure $\dnu \coloneqq \dW$ through $\nu(E) = \mu \l(\nabla h^{-1}(E) \r)$ for every Borel set $E$ on $\RR^d$. We say that $\nu$ is the \textbf{push-forward measure} of $\mu$ under $\nabla h$, and we denote it by $\nabla h \# \mu = \nu$. If $\X \sim \mu$ and $\Y\sim \nu$, we will sometimes abuse the notation by writing $\nh \# \X = \Y$ to mean $\nh \# \mu = \nu.$ 

If $\nh \# \mu = \nu$, the triplet $(\mu, \nu, h)$ must satisfy the Monge-Amp\`ere equation:
\beq \label{eq:Monge-Ampere}
e^{-V} = e^{-W \circ \nabla h} \det \nabla^2 h.
\eeq
Using $(\nabla h)^{-1}  = \nabla \hstar $ and $\nabla^2 h \circ \nabla \hstar = \nabla^2 \hstar^{-1}$, we see that \eqref{eq:Monge-Ampere} is equivalent to
\beq \label{eq:Monge-Ampere-dual}
e^{-W} = e^{-V \circ \nabla \hstar} \det \nabla^2 \hstar
\eeq
which implies $\nabla \hstar \# \nu = \mu$.


The 2-Wasserstein distance between $\mu_1$ and $\mu_2$ is defined by\footnote{In general, \eqref{eq:optimal_transportation} is ill-defined; see \cite{villani2003topics}. The validity of \eqref{eq:optimal_transportation} is guaranteed by McCann's theorem \citep{mccann1995existence} under \textbf{Assumption \ref{ass:distributions_moment}} and \textbf{\ref{ass:distributions_hausdorff}}. }
\beq  \label{eq:optimal_transportation}
\Wcal^2_2(\mu_1, \mu_2) \coloneqq \inf_{T: T\# \mu_1 = \mu_2} \int  \|  \x - T(\x) \|^2  \dmu_1(\x).
\eeq

\section{Mirrored Langevin Dynamics}\label{sec:amld}
This section demonstrates a framework for transforming constrained sampling problems into unconstrained ones. We then focus on applications to sampling from strongly log-concave distributions and simplex-constrained distributions, even though the framework is more general and future-proof.

\subsection{Motivation and Algorithm}\label{sec:amld_motivation}
We begin by briefly recalling the mirror descent (MD) algorithm for optimization. In order to minimize a function over a bounded domain, say $\min_{\x \in \Xcal} f(\x)$, MD uses a mirror map $h$ to transform the primal variable $\x$ into the dual space $\y \coloneqq \nh (\x)$, and then performs gradient updates in the dual: $\y^+ = \y - \beta \nabla f(\x)$ for some step-size $\beta$. The mirror map $h$ is chosen to adapt to the geometry of the constraint $\Xcal$, which can often lead to faster convergence \cite{nemirovsky1983problem} or, more pivotal to this work, an \textbf{unconstrained} optimization problem \cite{beck2003mirror}.

Inspired by the MD framework, we would like to use the mirror map idea to remove the constraint for sampling problems.
Toward this end, we first establish a simple fact \citep{villani2003topics}:
\begin{theorem}\label{thm:coupling}
Let $h$ satisfy \textbf{Assumption \ref{ass:mirror_map}}. Suppose that $\X\sim \mu$ and $\Y = \nh (\X)$. Then $\Y  \sim \nu \coloneqq \nabla h \# \mu$ and $\nhstar (\Y) \sim \mu$.
\end{theorem}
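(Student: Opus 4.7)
The plan is to note that the statement has two parts, both of which follow almost immediately from the definition of push-forward measure and from standard convex analysis identifying $\nabla h^\star$ as the inverse of $\nabla h$. Nothing deep is required; the work is mostly bookkeeping under \textbf{Assumption \ref{ass:mirror_map}}.

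First I would establish the analytic prerequisite that $\nabla h : \Xcal \to \nabla h(\Xcal)$ is a $\mathcal{C}^1$-diffeomorphism whose inverse is $\nabla \hstar$. Since $h$ is closed, proper, $\mathcal{C}^2$, and $\nabla^2 h \succ 0$ on $\Xcal$, $h$ is strictly convex, so $\nabla h$ is injective; the inverse function theorem applied pointwise (using $\det \nabla^2 h > 0$) gives a smooth local inverse, and the classical Legendre duality identifies this inverse globally with $\nabla \hstar$. Consequently $\nabla \hstar \circ \nabla h = \id_{\Xcal}$ as a pointwise equality.

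Next I would prove the first assertion directly from the definition of push-forward. For any Borel set $E \subset \RR^d$, using measurability of $\nabla h$ (which is $\mathcal{C}^1$),
\eq{
\PP(\Y \in E) = \PP\bl( \nh(\X) \in E \br) = \PP\bl( \X \in \nh^{-1}(E) \br) = \mu\bl( \nh^{-1}(E) \br) = \nu(E),
}
where the last equality is the very definition $\nu = \nh \# \mu$ given in Section~\ref{sec:prelim}. Hence $\Y \sim \nu$.

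For the second assertion I would apply the identity $\nhstar \circ \nh = \id_{\Xcal}$ from the first paragraph to the random variable $\X$: almost surely,
\eq{
\nhstar(\Y) = \nhstar\bl(\nh(\X)\br) = \X,
}
so $\nhstar(\Y)$ has the same law as $\X$, namely $\mu$. Equivalently, one could invoke \eqref{eq:Monge-Ampere-dual} to conclude $\nhstar \# \nu = \mu$ and then repeat the push-forward computation of the previous paragraph. The only conceivable obstacle is the convex-analytic step identifying $(\nh)^{-1}$ with $\nhstar$ on all of $\Xcal$, but this is standard under \textbf{Assumption \ref{ass:mirror_map}} and I would simply cite it.
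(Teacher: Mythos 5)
Your proof is correct and follows essentially the same route as the paper's: the first claim is the push-forward definition applied to Borel sets, and the second reduces to the standard identification $(\nabla h)^{-1} = \nabla h^\star$ under \textbf{Assumption \ref{ass:mirror_map}}. You spell out the diffeomorphism/Legendre-duality justification more explicitly than the paper does, but the argument is the same.
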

\begin{proof}
For any Borel set $E$, we have $\nu(E) = \PP \l( \Y \in E \r)  = \PP \l( \X \in \nabla h^{-1}(E) \r) =  \mu\l(\nabla h^{-1} (E) \r)$. Since $\nh$ is one-to-one, $\Y = \nh (\X)$ if and only if $\X = \nh^{-1}(\Y)= \nhstar(\Y)$.
\end{proof}

In the context of sampling, \textbf{Theorem \ref{thm:coupling}} suggests the following simple procedure: For any target distribution $\dV$ with support $\Xcal$, we choose a mirror map $h$ on $\Xcal$ satisfying \textbf{Assumption \ref{ass:mirror_map}}, and we consider the \textbf{dual distribution} associated with $\dV$ and $h$:
\beq \label{eq:dual_distribution}
\dW \coloneqq \nh \# \dV.
\eeq 
\textbf{Theorem \ref{thm:coupling}} dictates that if we are able to draw a sample $\Y$ from $\dW$, then $\nhstar(\Y)$ immediately gives a sample for the desired distribution $\dV$. Furthermore, suppose for the moment that $\textup{dom}(\hstar) = \RR^d$, so that $\dW$ is unconstrained. Then we can simply exploit the classical Langevin Dynamics \eqref{eq:langevin_dynamics} to efficiently take samples from $\dW$. 

The above reasoning leads us to set up the \textbf{Mirrored Langevin Dynamics} (MLD):
\begin{tcolorbox}
\beq \label{eq:asymmetric_mirror_descent_dyanmics}
\textbf{MLD} \equiv   \left\{
                \begin{array}{ll}
                  \drm\Y_t = -(\nabla W\circ \nabla h)(\X_t) \dt + \sqrt{2} \dBt\\
                  \X_t = \nabla \hstar (\Y_t)
                \end{array}.
        \right.
\eeq
\end{tcolorbox}
Notice that the stationary distribution of $\Y_t$ in MLD is $\dW$, since $\drm \Y_t$ is nothing but the Langevin Dynamics \eqref{eq:langevin_dynamics} with $\nabla V \leftarrow \nabla W$. As a result, we have $\X_t \rightarrow \X_\infty \sim \dV$.

%
%
%
%
%

Using \eqref{eq:Monge-Ampere}, we can equivalently write the $\drm\Y_t$ term in \eqref{eq:asymmetric_mirror_descent_dyanmics} as \beq \nn
\drm\Y_t = -  \nabla^2 h(\X_t)^{-1}\Big(\nabla V(\X_t) + \nabla \log\det \nabla^2h(\X_t)  \Big)\dt + \sqrt{2} \dBt.\eeq
In order to arrive at a practical algorithm, we then discretize the MLD, giving rise to the following equivalent iterations:
\begin{tcolorbox}\vspace{-3mm}
\beq \label{eq:asymmetric_mirror_dynamics_discrete}
\y^{t+1} - \y^t = \left\{
                \begin{array}{ll}
                    - \beta^t \nabla W  (\y^t)+ \sqrt{2\beta^t}  \bm{\xi}^t \\
                  -  \beta^t\nabla^2 h(\x^t)^{-1}\Big(\nabla V(\x^t) + \nabla \log\det \nabla^2h(\x^t)  \Big) + \sqrt{2\beta^t}  \bm{\xi}^t 
                \end{array}
              \right. 
\eeq
\end{tcolorbox}
where in both cases $\x^{t+1} = \nhstar (\y^{t+1})$, $\bm{\xi}^t$'s are i.i.d. standard Gaussian, and $\beta^t$'s are step-sizes. 
The first formulation in \eqref{eq:asymmetric_mirror_dynamics_discrete} is useful when $\nabla W$ has a tractable form, while the second one can be computed using solely the information of $V$ and $h$.



Next, we turn to the convergence of discretized MLD. Since $\drm \Y_t$ in \eqref{eq:asymmetric_mirror_descent_dyanmics} is the classical Langevin Dynamics, and since we have assumed that $W$ is unconstrained, it is typically not difficult to prove the convergence of $\y^t$ to $\Y_\infty \sim \dW$. However, what we ultimately care about is the guarantee on the primal distribution $\dV$. The purpose of the next theorem is to fill the gap between primal and dual convergence.

We consider three most common metrics in evaluating approximate sampling schemes, namely the 2-Wasserstein distance $\Wcal_2$, the total variation $\dTV$, and the relative entropy $D(\cdot \| \cdot)$. 

\begin{theorem}[Convergence in $\y^t$ implies convergence in $\x^t$]\label{thm:convergence_TV}
For any $h$ satisfying \textbf{Assumption \ref{ass:mirror_map}}, we have $d_{\textup{TV}}(\nabla h\# \mu_1, \nabla h\# \mu_2) = d_{\textup{TV}}(\mu_1, \mu_2) $ and $D(\nabla h\# \mu_1 \|  \nabla h\# \mu_2) = D(\mu_1 \| \mu_2)$. In particular, we have $d_{\textup{TV}}(\y^t, \Y_\infty) =d_{\textup{TV}}(\x^t, \X_\infty)$ and $D(\y^t \|  \Y_\infty) = D(\x^t \| \X_\infty)$ in \eqref{eq:asymmetric_mirror_dynamics_discrete}.

If, furthermore, $h$ is $\rho$-strongly convex: $\nabla^2 h \succeq \rho I$. Then $\Wcal_2(\x^t, \X_\infty)\leq \frac{1}{\rho}\Wcal_2(\y^t, \Y_\infty)$.
\end{theorem}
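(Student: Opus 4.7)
The plan is to handle the three invariance/contraction claims separately, all leveraging the fact that under Assumption~\ref{ass:mirror_map} the map $\nabla h$ is a $\mathcal{C}^1$ diffeomorphism from $\Xcal$ onto $\nabla h(\Xcal)$, with inverse $\nabla \hstar$. The main technical point is to use Monge--Amp\`ere \eqref{eq:Monge-Ampere} to move densities back and forth cleanly; once that is in hand, the computations are essentially one-line.

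For the total variation equality, I would use the dual representation $\dTV(\mu,\nu)=\sup_E |\mu(E)-\nu(E)|$ together with the definition of push-forward. Since $\nabla h$ is a bijection, the map $E\mapsto \nabla h^{-1}(E)$ is a bijection on Borel sets, so
\[
\dTV(\nabla h\#\mu_1,\nabla h\#\mu_2)=\sup_E|\mu_1(\nabla h^{-1}E)-\mu_2(\nabla h^{-1}E)|=\sup_F|\mu_1(F)-\mu_2(F)|=\dTV(\mu_1,\mu_2).
\]
For relative entropy, write $\mu_i = e^{-V_i}\,\drm\x$ and $\nu_i=\nabla h\#\mu_i=e^{-W_i}\,\drm\y$; by \eqref{eq:Monge-Ampere}, $V_i = W_i\circ \nabla h - \log\det \nabla^2 h$, so $V_2-V_1=(W_2-W_1)\circ \nabla h$. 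Then changing variables $\y=\nabla h(\x)$ gives
\[
D(\nu_1\|\nu_2)=\int (W_2-W_1)\,\drm\nu_1 = \int (W_2-W_1)\circ \nabla h\,\drm\mu_1 = \int (V_2-V_1)\,\drm\mu_1 = D(\mu_1\|\mu_2),
\]
where the Jacobian factors cancel because both $\nu_1$ and $\nu_2$ are pushed forward by the same map. The two ``in particular'' statements then follow by taking $\mu_1,\mu_2$ to be the laws of $\x^t$ and $\X_\infty$ and applying $\nabla h\#\x^t=\y^t$, $\nabla h\#\X_\infty=\Y_\infty$ from Theorem~\ref{thm:coupling}.

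For the 2-Wasserstein bound, the strategy is a standard Lipschitz-pushforward argument. Under $\rho$-strong convexity of $h$, $\hstar$ is $\frac{1}{\rho}$-smooth, hence $\nabla \hstar$ is $\frac{1}{\rho}$-Lipschitz on $\RR^d$. By Theorem~\ref{thm:coupling}, $\x^t = \nabla \hstar(\y^t)$ and $\X_\infty = \nabla \hstar(\Y_\infty)$. Let $(\Y,\Y')$ be an optimal coupling realising $\Wcal_2(\y^t,\Y_\infty)$ (which exists by Assumptions~\ref{ass:distributions_moment}--\ref{ass:distributions_hausdorff} via McCann's theorem). Then $(\nabla\hstar(\Y),\nabla\hstar(\Y'))$ is a coupling of the laws of $\x^t$ and $\X_\infty$, and
\[
\Wcal_2^2(\x^t,\X_\infty)\le \EE\|\nabla\hstar(\Y)-\nabla\hstar(\Y')\|^2 \le \tfrac{1}{\rho^2}\EE\|\Y-\Y'\|^2 = \tfrac{1}{\rho^2}\Wcal_2^2(\y^t,\Y_\infty).
\]

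I do not expect a serious obstacle in any of these steps. The only subtlety worth double-checking is making sure the change of variables for the entropy identity is valid when $\nabla h(\Xcal)$ is a proper subset of $\RR^d$ (so that $\nu_1,\nu_2$ are supported on that subset); this is handled by the fact that $\nabla h$ is a $\mathcal{C}^1$ diffeomorphism onto its image, together with Assumption~\ref{ass:distributions_hausdorff} which rules out pathological boundary effects.
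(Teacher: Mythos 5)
Your proof is correct. The total-variation and relative-entropy parts coincide with the paper's argument (bijectivity of $\nabla h$ on Borel sets for $d_{\textup{TV}}$, and the Monge--Amp\`ere cancellation $V_2-V_1=(W_2-W_1)\circ\nabla h$ for $D(\cdot\|\cdot)$), so there is nothing to add there. The 2-Wasserstein part, however, takes a genuinely different and more elementary route. The paper proves an exact Bregman--Wasserstein duality, $\Wcal_{B_h}(\mu_1,\mu_2)=\Wcal_{B_{\hstar}}(\nabla h\#\mu_1,\nabla h\#\mu_2)$ (its \textbf{Lemma \ref{lem:duality_wasserstein}}), which forces it to work with the Monge formulation $\inf_{T}$ over transport maps, to verify well-posedness conditions for the cost $B_{\hstar}$ under a temporary $\mathcal{C}^5$ assumption, and then to redo the $\mathcal{C}^2$ case separately via \eqref{eq:bregman_fenchel} and the equivalence $T\#(\nabla h\#\mu_1)=\nabla h\#\mu_2 \iff (\nhstar\circ T\circ\nh)\#\mu_1=\mu_2$. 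You instead push an optimal \emph{coupling} of $(\y^t,\Y_\infty)$ through the $\tfrac{1}{\rho}$-Lipschitz map $\nhstar$, which immediately gives $\Wcal_2(\x^t,\X_\infty)\le\tfrac{1}{\rho}\Wcal_2(\y^t,\Y_\infty)$ with no regularity discussion at all; this is cleaner and suffices for the theorem as stated, at the cost of not yielding the exact duality identity, which the paper presents as a result of independent interest. One small nit: you do not need McCann's theorem for the existence of an optimal coupling --- that follows from finite second moments and standard lower-semicontinuity/tightness arguments; McCann's theorem concerns optimal \emph{maps}, which your argument never uses. You should also note explicitly that the $\tfrac{1}{\rho}$-Lipschitz bound for $\nhstar$ on $\nabla h(\Xcal)$ follows from $\langle\nh(\x)-\nh(\x'),\x-\x'\rangle\ge\rho\|\x-\x'\|^2$, since $\textup{dom}(\hstar)$ need not be all of $\RR^d$; this is a one-line fix and does not affect correctness.
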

\begin{proof}\vspace{-4mm}
See \textbf{Appendix \ref{app:proof_amld_TV}}.
\end{proof}

\subsection{Applications to Sampling from Constrained Distributions}\label{subsec:applications}
We now consider applications of MLD. For strongly log-concave distributions with general constraint, we prove matching rates to that of unconstrained ones; see Section \ref{sec:strongly_log-concave}. In Section \ref{sec:simplex}, we consider the important case where the constraint is a {probability simplex}\footnote{More examples of mirror map can be found in \textbf{Appendix \ref{app:mirror_maps}}.}.

\subsubsection{Sampling from a strongly log-concave distribution with constraint}\label{sec:strongly_log-concave}
As alluded to in the introduction, the existing convergence rates for constrained distributions are significantly worse than their unconstrained counterparts; see Table \ref{tab:constrained_rates} for a comparison. 
\begin{table}[t]
\centering
\begin{tabular}{|l||*{4}{c|}}\hline
\makebox[5em]{\small Assumption} & \makebox[5em]{{$D(\cdot \| \cdot )$}} &\makebox[5em]{{$\mathcal{W}_2$}}&\makebox[5em]{$d_{\text{TV}}$}&\makebox[5em]{\small Algorithm} \\\hline \hline
\small$L I \succeq \nabla^2 V \succeq m I$ & {\small unknown} & {\small unknown} &  {\small$\tilde{O}\l(\epsilon^{-6}d^5 \r)$ } &\small {MYULA  \cite{brosse17sampling}} \\\hline
\small $L I \succeq \nabla^2 V \succeq 0$ & {\small unknown} & {\small unknown} &{\small$\tilde{O}\l(\epsilon^{-12}d^{12} \r)$} &\small {PLMC \cite{bubeck2015sampling}}  \\\hline  
\small $\nabla^2 V \succeq m I$ & {\small \color{blue}$\tilde{O}\l(\epsilon^{-1}d \r)$} & {\small \color{blue}$\tilde{O}\l(\epsilon^{-2}d \r)$} &{\small \color{blue}$\tilde{O}\l(\epsilon^{-2}d \r)$} &\small {\color{blue} MLD; this work}  \\\hline \hline
\makecell{\small$ L I \succeq \nabla^2 V \succeq m I$, \\ $V$ unconstrained} & {\small $\tilde{O}\l(\epsilon^{-1}d \r)$} & {\small $\tilde{O}\l(\epsilon^{-2}d \r)$} &{\small $\tilde{O}\l(\epsilon^{-2}d \r)$} &Langevin Dynamics \cite{cheng2017convergence, dalalyan2017theoretical,durmus2018analysis} \\\hline
\end{tabular}\caption{Convergence rates for sampling from $\dV$ with $\textup{dom}(V)$ bounded}\label{tab:constrained_rates}\vspace{-8mm}
\end{table}
%
The main result of this subsection is the existence of a ``good'' mirror map for \emph{arbitrary} constraint, with which the dual distribution $\dW$ becomes unconstrained:
\begin{theorem}[Existence of a good mirror map for MLD]\label{thm:existence_good_mirror_map}
Let $\drm \mu(\x) = e^{-V(\x)} \drm \x$ be a probability measure with bounded convex support such that $V \in \mathcal{C}^2$, $\nabla^2 V \succeq m I \succ 0$, and $V$ is bounded away from $+\infty$ in the interior of the support. Then there exists a mirror map $h \in \mathcal{C}^2$ such that the discretized MLD \eqref{eq:asymmetric_mirror_dynamics_discrete} yields
\beq  \nn
D\l( \x^T \| \X_\infty \r)=\tilde{O}\l(   \frac{d}{T} \r),  \quad \Wcal_2 \l( \x^T, \X_\infty \r) = \tilde{O}\l(  \sqrt{ \frac{d}{{T}} } \r), \quad d_{\textup{TV}} \l(\x^T, \X_\infty \r) = \tilde{O}\l( \sqrt{ \frac{d}{{T}} } \r).
\eeq
%
%
\end{theorem}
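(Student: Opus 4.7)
The plan is to prove existence constructively, by choosing $h$ via optimal transport in such a way that the dual distribution $\drm\nu = e^{-W(\y)}\drm\y$ is simply an isotropic Gaussian on $\RR^d$. Concretely, let $\gamma$ denote the standard Gaussian $\Ncal(0,I)$ and invoke Brenier/McCann's theorem (applicable under \textbf{Assumptions \ref{ass:distributions_moment}} and \textbf{\ref{ass:distributions_hausdorff}}) to obtain a convex function $h$ with $\nabla h \# \mu = \gamma$. The Monge-Amp\`ere identity \eqref{eq:Monge-Ampere} then forces $W(\y) = \tfrac{1}{2}\|\y\|^2$ up to an additive constant, so the dual problem is automatically unconstrained, $1$-strongly log-concave, and $1$-smooth.

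Next I would verify that this $h$ meets \textbf{Assumption \ref{ass:mirror_map}} and the strong-convexity condition of \textbf{Theorem \ref{thm:convergence_TV}}. Since $V\in\mathcal{C}^2$ is bounded on the interior of $\Xcal$, the density $e^{-V}$ is bounded away from zero on interior compacts, and Caffarelli's regularity theory upgrades the Brenier potential $h$ to $\mathcal{C}^2$ with $\nabla^2 h \succ 0$. For the strong-convexity modulus, I would apply Caffarelli's contraction theorem: combined with a rescaling by $\sqrt{m}$ that matches $V$'s strong-convexity constant to that of the Gaussian, it yields that the inverse Brenier map $\nabla h^\star : \gamma \to \mu$ is $1/\sqrt{m}$-Lipschitz, i.e.\ $\nabla^2 h^\star \preceq \tfrac{1}{\sqrt{m}} I$, equivalently $\nabla^2 h \succeq \sqrt{m}\, I$. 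Hence $h$ is $\rho$-strongly convex with $\rho = \sqrt{m}$.

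With $W(\y)=\tfrac{1}{2}\|\y\|^2$ being $1$-strongly log-concave and $1$-smooth on all of $\RR^d$, the first formulation of the discretized MLD in \eqref{eq:asymmetric_mirror_dynamics_discrete} reduces to vanilla ULA with \emph{closed-form} drift $\nabla W(\y^t) = \y^t$. The sharp unconstrained rates recorded in the bottom row of Table~\ref{tab:constrained_rates} then give $D(\y^T\|\Y_\infty)=\tilde{O}(d/T)$, $\Wcal_2(\y^T,\Y_\infty)=\tilde{O}(\sqrt{d/T})$, and $\dTV(\y^T,\Y_\infty)=\tilde{O}(\sqrt{d/T})$. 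Finally, \textbf{Theorem \ref{thm:convergence_TV}} transports these bounds to the primal iterates $\x^T$ with equality in $D$ and $\dTV$ and at most a $\rho^{-1} = m^{-1/2}$ prefactor in $\Wcal_2$, yielding the claimed conclusion.

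The main obstacle is the regularity step: one must justify that Caffarelli's $\mathcal{C}^2$ theory and contraction estimate apply even though the source measure $\mu$ is supported on a \emph{bounded} set while the target $\gamma$ has full support (so $\nabla h$ must blow up near $\partial\Xcal$). The hypothesis that $V$ is bounded away from $+\infty$ on the interior is tailor-made to supply the positive lower bound on the source density required by Caffarelli's theory on interior compacts, and only interior $\mathcal{C}^2$ regularity is needed to make sense of the MLD dynamics and to invoke \textbf{Theorem \ref{thm:convergence_TV}}; the boundary behavior is harmless because $\Y_t$ and $\y^t$ live in $\RR^d$, never on $\partial\Xcal$.
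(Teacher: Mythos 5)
Your proposal is correct and follows essentially the same route as the paper's proof: take $h$ to be the (Legendre dual of the) Brenier potential transporting the standard Gaussian to $\mu$, invoke Caffarelli's regularity theorem for $\mathcal{C}^2$ smoothness and Caffarelli's contraction theorem for a strong-convexity modulus of $h$ depending only on $m$, observe that the dual dynamics is the Ornstein--Uhlenbeck process with known unconstrained rates, and transfer back via \textbf{Theorem \ref{thm:convergence_TV}}. The only discrepancy is the contraction constant (you get $\nabla^2 h^\star \preceq m^{-1/2} I$ where the paper states $\nabla^2 h^\star \preceq m^{-1} I$ via a strengthened form of the contraction theorem), which affects only the $m$-dependent prefactor in the $\Wcal_2$ bound and is absorbed by the $\tilde{O}(\cdot)$ notation.
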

\begin{proof}\vspace{-4mm}
See \textbf{Appendix \ref{app:proof_existence}}.
\end{proof}
\begin{remark}
We remark that \textbf{Theorem \ref{thm:existence_good_mirror_map}} is only an existential result, not an actual algorithm. Practical algorithms are considered in the next subsection.
\end{remark}

\subsubsection{Sampling Algorithms on Simplex}\label{sec:simplex}
We apply the discretized MLD \eqref{eq:asymmetric_mirror_dynamics_discrete} to the task of sampling from distributions on the probability simplex $\Delta_{d} \coloneqq \{ \x \in \RR^{d}\ | \ \sum_{i=1}^{d}{x_i} \leq 1, x_i \geq 0 \}$, which is instrumental in many fields of machine learning and statistics.

On a simplex, the most natural choice of $h$ is the entropic mirror map \cite{beck2003mirror}, which is well-known to be 1-strongly convex:
\beq \label{eq:entropic_mirror}
h(\x) = \sum_{\ell =1}^d x_i \log x_\ell + \l(1-\sum_{\ell =1}^d{x_\ell} \r)\log\l(1-\sum_{\ell =1}^d{x_\ell}\r), \text{ where }\  0 \log 0 \coloneqq 0.
\eeq
In this case, the associated dual distribution can be computed explicitly.
\begin{lemma}[Sampling on a simplex with entropic mirror map] \label{lem:W_simplex}
Let $\dV$ be the target distribution on $\Delta_d$, $h$ be the entropic mirror map \eqref{eq:entropic_mirror}, and $\dW \coloneqq \nh \# \dV$. Then the potential $W$ of the push-forward measure admits the expression
\begin{align}
W(\y) 
&= V\circ \nhstar (\y)- \sum_{\ell=1}^d{y_\ell} + (d+1) \hstar (\y)\label{eq:formula_W}
\end{align}where $\hstar (\y) = \log \l(1+ \sum_{\ell=1}^d e^{y_\ell} \r)$ is the Fenchel dual of $h$, which is strictly convex and 1-Lipschitz gradient.  
\end{lemma}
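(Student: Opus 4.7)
The plan is to invoke the dual Monge--Ampère identity \eqref{eq:Monge-Ampere-dual}, which gives
\[
W(\y) \;=\; V\bigl(\nabla h^\star(\y)\bigr) \;-\; \log \det \nabla^2 h^\star(\y),
\]
so the entire lemma reduces to explicit computations of $\nabla h^\star$ and $\log \det \nabla^2 h^\star$ for the entropic mirror map on $\Delta_d$.

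First I would compute $\nabla h$ componentwise: differentiating \eqref{eq:entropic_mirror} yields $y_i = (\nabla h(\x))_i = \log x_i - \log\bigl(1-\sum_\ell x_\ell\bigr)$. Exponentiating and summing over $i$ gives $\sum_i x_i = (1-\sum_\ell x_\ell)\sum_i e^{y_i}$, which I would solve to obtain $1-\sum_\ell x_\ell = 1/(1+\sum_\ell e^{y_\ell})$ and hence the softmax formula $(\nabla h^\star(\y))_i = e^{y_i}/(1+\sum_\ell e^{y_\ell})$. Integrating (or verifying via Fenchel duality) then identifies $h^\star(\y) = \log(1+\sum_\ell e^{y_\ell})$.

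Next I would compute $\nabla^2 h^\star(\y) = \mathrm{diag}(\x) - \x\x^\top$ where $\x = \nabla h^\star(\y)$, directly from the softmax expression. The key calculation is the determinant, for which I would apply the matrix determinant lemma:
\[
\det\bigl(\mathrm{diag}(\x) - \x\x^\top\bigr) \;=\; \det(\mathrm{diag}(\x))\,\bigl(1 - \x^\top \mathrm{diag}(\x)^{-1}\x\bigr) \;=\; \Bigl(\prod_\ell x_\ell\Bigr)\Bigl(1 - \sum_\ell x_\ell\Bigr).
\]
Substituting the softmax expressions, $\prod_\ell x_\ell = e^{\sum_\ell y_\ell}/(1+\sum_k e^{y_k})^d$ and $1-\sum_\ell x_\ell = 1/(1+\sum_k e^{y_k})$, yields $\log \det \nabla^2 h^\star(\y) = \sum_\ell y_\ell - (d+1)\,h^\star(\y)$. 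Plugging this into the Monge--Ampère identity produces \eqref{eq:formula_W}.

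It remains to check that $h^\star$ is strictly convex with $1$-Lipschitz gradient. Setting $p_\ell = x_\ell$ for $\ell \leq d$ and $p_{d+1} = 1-\sum_\ell x_\ell$ (all strictly positive), and letting $v_{d+1}=0$, the quadratic form $v^\top \nabla^2 h^\star(\y)\,v = \sum_\ell x_\ell v_\ell^2 - (\sum_\ell x_\ell v_\ell)^2$ equals the variance of $v$ under the probability vector $p$, hence is nonnegative and vanishes only at $v=0$. For the Lipschitz bound I would use $\mathrm{diag}(\x) - \x\x^\top \preceq \mathrm{diag}(\x)$, whose operator norm is $\max_\ell x_\ell < 1$. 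I do not anticipate a serious obstacle; the main thing to be careful about is keeping the book-keeping of the ``$+1$'' extra simplex coordinate consistent between the primal expression of $h$ and the dual computation of $\det \nabla^2 h^\star$, which is precisely what produces the factor $(d+1)$ in \eqref{eq:formula_W}.
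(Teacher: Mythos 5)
Your proof is correct and follows essentially the same route as the paper's: both reduce the lemma to the Monge--Amp\`ere equation plus the matrix determinant lemma, and both verify the softmax/log-sum-exp formulas for $\nabla h^\star$ and $h^\star$. The only cosmetic difference is that the paper works with the primal form \eqref{eq:Monge-Ampere}, computing $\log\det\nabla^2 h(\x) = -\sum_{\ell=1}^{d+1}\log x_\ell$ and then composing with $\nabla h^\star$, whereas you use the dual form \eqref{eq:Monge-Ampere-dual} and evaluate $\log\det\nabla^2 h^\star(\y)$ directly in the dual coordinates; the two computations are mirror images of each other and your variance argument for strict convexity and the bound $\nabla^2 h^\star \preceq \mathrm{diag}(\x) \preceq I$ correctly supply the final claims of the lemma.
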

\begin{proof}\vspace{-3mm}
See \textbf{Appendix \ref{app:W_simplex}}.
\end{proof}
Crucially, we have $\textup{dom}(\hstar) = \RR^d$, so that the Langevin Dynamics for $\dW$ is \textbf{unconstrained}.


Based on \textbf{Lemma \ref{lem:W_simplex}}, we now present the surprising case of the \emph{non-log-concave} Dirichlet posteriors, a distribution of central importance in topic modeling  \citep{blei2003latent}, for which the dual distribution $\dW$ becomes strictly \emph{log-concave}. 
\begin{example}[Dirichlet Posteriors]\label{exp:dirichlet}
\emph{Given parameters $\alpha_1, \alpha_2, ..., \alpha_{d+1} >0$ and observations $n_1, n_2, ..., n_{d+1}$ where $n_{\ell}$ is the number of appearance of category $\ell$, the probability density function of the Dirichlet posterior is 
\beq  \label{eq:dirichlet_distribution}
p(\x) = \frac{1}{C} \prod_{\ell=1}^{d+1} x_\ell^{n_\ell+\alpha_\ell-1}, \quad \x \in \textup{int}\l(\Delta_{d}\r)
\eeq
where $C$ is a normalizing constant and $x_{d+1} \coloneqq 1- \sum_{\ell=1}^d x_\ell$. The corresponding $V$ is
\beq \nn
V(\x) = -\log p(\x) = \log C - \sum_{\ell=1}^{d+1} (n_\ell+\alpha_\ell-1) \log x_\ell, \quad \x \in \textup{int}\l(\Delta_{d}\r). \nn
\eeq
The interesting regime of the Dirichlet posterior is when it is \textbf{sparse}, meaning the majority of the $n_\ell$'s are zero and a few $n_k$'s are large, say of order $O(d)$. It is also common to set $\alpha_\ell <1$ for all $\ell$ in practice. Evidently, $V$ is neither convex nor concave in this case, and no existing non-asymptotic rate can be applied. However, plugging $V$ into \eqref{eq:formula_W} gives
\beq \label{eq:dirichlet_dual}
W(\y) = \log C - \sum_{\ell=1}^d (n_\ell + \alpha_\ell)y_\ell + \l(\sum_{\ell=1}^{d+1} (n_\ell+\alpha_\ell) \r) \hstar (\y)
\eeq
which, magically, becomes strictly convex and $O(d)$-Lipschitz gradient \textbf{no matter what the observations and parameters are!} In view of \textbf{Theorem \ref{thm:convergence_TV}} and \citep[][\textbf{Corollary 7}]{durmus2018analysis}, one can then apply \eqref{eq:asymmetric_mirror_dynamics_discrete} to obtain an $\tilde{O}\l(\epsilon^{-2} {d^2} R_0\r)$ convergence in relative entropy, where $R_0 \coloneqq \Wcal^2_2(\y^0,\dW)$ is the initial Wasserstein distance to the target. } \hfill \qed
\end{example}


\section{Stochastic Mirrored Langevin Dynamics}\label{sec:stochastic_mld}
\begin{algorithm}[t]
\caption{Stochastic Mirrored Langevin Dynamics (SMLD)}\label{alg:SMLD}
\begin{algorithmic}[1]
\Require{Target distribution $\dV$ where $V = \sum_{i=1}^NV_i$, step-sizes $\beta^t$, batch-size $b$}
    \State Find $W_i$ such that $e^{-NW_i} \propto \nh \# e^{-NV_i}$ for all $i$. 
    \For{$t \gets 0,1,\cdots, T-1$}
        \State Pick a mini-batch $B$ of size $b$ uniformly at random.
        \State Update $\y^{t+1} = \y^{t} -  \frac{\beta^tN}{b}\sum_{i\in B}\nabla W_i  (\y^t)+ \sqrt{2\beta^t}  \bm{\xi}^t $ 
        \State $\x^{t+1} = \nhstar (\y^{t+1})$ \Comment{Update only when necessary.}
        %
     \EndFor
\end{algorithmic}
\Return{$\x^T$}
\end{algorithm}

We have thus far only considered \textbf{deterministic} methods based on exact gradients. In practice, however, evaluating gradients typically involves one pass over the full data, which can be time-consuming in large-scale applications. In this section, we turn attention to the \textbf{mini-batch} setting, where one can use a small subset of data to form stochastic gradients. 


Toward this end, we assume:
\begin{assumption}[Primal Decomposibility]\label{ass:primal_decomposability}
The target distribution $\dV$ admits a decomposable structure $V = \sum_{i=1}^N V_i$ for some functions $V_i$.
\end{assumption}

Consider the following common scheme in obtaining stochastic gradients. Given a batch-size $b$, we randomly pick a mini-batch $B$ from $\{ 1, 2, \dots, N\}$ with $|B| = b$, and form an unbiased estimate of $\nabla V$ by computing
\beq
\tilde{\nabla}V \coloneqq \frac{N}{b} \sum_{i\in B} \nabla V_i. \label{eq:stochastic_gradient}
\eeq
The following lemma asserts that exactly the same procedure can be carried out in the dual.
\begin{lemma}\label{lem:unbiasedness}
Assume that $h$ is 1-strongly convex. For i = $1,2,...,N,$ let $W_i$ be such that
\beq \label{eq:dual_stochastic_gradient}
e^{-NW_i} = \nh \# \frac{e^{-NV_i}}{ \int e^{-NV_i}}.
\eeq
Define $W \coloneqq \sum_{i=1}^N W_i$ and $\tilde{\nabla}W \coloneqq \frac{N}{b} \sum_{i\in B} \nabla W_i$, where $B$ is chosen as in \eqref{eq:stochastic_gradient}. Then:
\vspace{-2mm}
\begin{enumerate}
\item Primal decomposibility implies dual decomposability: There is a constant $C$ such that $e^{-(W+C)} = \nh \# e^{-V}$.
\vspace{-2mm}
\item For each $i$, the gradient $\nabla W_i$ depends only on $\nabla V_i$ and the mirror map $h$.
\vspace{-2mm}
\item The gradient estimate is unbiased: $\EE \tilde{\nabla}W = \nabla W$.
\vspace{-2mm}
\item The dual stochastic gradient is more accurate: $\EE \| \tilde{\nabla}W - \nabla W \|^2 \leq \EE \| \tilde{\nabla}V - \nabla V \|^2$.
\end{enumerate}
\vspace{-2mm}
\end{lemma}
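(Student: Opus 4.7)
The plan is to chain the four items together, all starting from a per-component Monge--Ampère identity. Let $Z_i \coloneqq \int e^{-NV_i}\drm\x$. Applying \eqref{eq:Monge-Ampere} to the hypothesis \eqref{eq:dual_stochastic_gradient} rewrites the push-forward statement as the pointwise identity
\beq \nn
e^{-NV_i(\x)} \;=\; Z_i\, e^{-NW_i(\nh(\x))}\det\n2h(\x), \qquad i=1,\dots,N.
\eeq
For item 1, I would multiply these $N$ identities and take the $N$\ts{th} root, which gives $e^{-V(\x)} = (\prod_i Z_i)^{1/N}\,e^{-W(\nh(\x))}\det\n2h(\x)$. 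Reading this back through \eqref{eq:Monge-Ampere} (which applies because \textbf{Assumption \ref{ass:mirror_map}} makes $\nh$ a $\mathcal{C}^1$ diffeomorphism onto its image) yields $\nh\#e^{-V} = e^{-(W+C)}$ with $C = -\tfrac{1}{N}\sum_{i=1}^N\log Z_i$.

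For item 2, take $\log$ and gradient of the per-component identity to obtain
\beq \nn
\nabla V_i(\x) \;=\; \n2h(\x)\,\nabla W_i(\nh(\x)) - \tfrac{1}{N}\nabla\log\det\n2h(\x),
\eeq
which inverts to $\nabla W_i(\y) = \n2h(\nhstar(\y))^{-1}\l[\nabla V_i(\nhstar(\y)) + \tfrac{1}{N}\nabla\log\det\n2h(\nhstar(\y))\r]$; the right-hand side involves only $\nabla V_i$ and $h$, as claimed. Item 3 is the standard mini-batch unbiasedness computation: a uniformly chosen $B$ of size $b$ satisfies $\PP(i\in B) = b/N$, so $\EE\tilde\nabla W = \frac{N}{b}\sum_i\PP(i\in B)\nabla W_i = \sum_i \nabla W_i = \nabla W$.

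Item 4 is the key inequality and falls out of item 2 after the $\log\det$ term cancels. Summing the per-component gradient identity over $i\in B$, scaling by $N/b$, and subtracting the full-sum version $\nabla V(\x) = \n2h(\x)\nabla W(\nh(\x)) - \nabla\log\det\n2h(\x)$ leaves
\beq \nn
\tilde\nabla V(\x) - \nabla V(\x) \;=\; \n2h(\x)\l[\tilde\nabla W(\y) - \nabla W(\y)\r], \qquad \y=\nh(\x).
\eeq
Since $h$ is $1$-strongly convex, $\n2h\succeq I$, so its smallest singular value is at least $1$ and $\|\n2h(\x)v\|\ge\|v\|$ for all $v$. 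Squaring the identity and taking expectations over $B$ delivers $\EE\|\tilde\nabla W-\nabla W\|^2 \le \EE\|\tilde\nabla V-\nabla V\|^2$.

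The only nontrivial bookkeeping is in item 1, where one must keep track of the normalizing constants $Z_i$ through the product; I do not expect the other items to pose any real difficulty, as each is a one-line consequence of the per-component Monge--Ampère identity combined with, respectively, differentiation, linearity of expectation, and $\n2h\succeq I$.
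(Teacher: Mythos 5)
Your proposal is correct and follows essentially the same route as the paper: multiply the per-component Monge--Amp\`ere identities and take the $N$\ts{th} root for item 1, differentiate for items 2 and 4, and use linearity of expectation for item 3. For item 4 the paper phrases the key relation as $N\nabla W_i - \nabla W = \n2hstar\l(N\nabla V_i\circ\nhstar - \nabla V\circ\nhstar\r)$ and bounds the spectral norm of $\n2hstar$ by $1$, which is exactly the inverse form of your identity $\tilde\nabla V - \nabla V = \n2h\l[\tilde\nabla W - \nabla W\r]$ combined with $\n2h\succeq I$.
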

\begin{proof}\vspace{-3mm}
See \textbf{Appendix \ref{app:dual_decomposability}}.
\end{proof}\vspace{-2mm}
\textbf{Lemma \ref{lem:unbiasedness}} furnishes a template for the mini-batch extension of MLD. The pseudocode is detailed in \textbf{Algorithm \ref{alg:SMLD}}, whose convergence rate is given by the next theorem.


\begin{theorem}\label{thm:dirichlet_provable}
Let $\dV$ be a distribution satisfying \textbf{Assumption \ref{ass:primal_decomposability}}, and $h$ a 1-strongly convex mirror map. Let $\sigma^2 \coloneqq \EE \|  \tilde{\nabla}V - \nabla V \|^2$ be the variance of the stochastic gradient of $V$ in \eqref{eq:stochastic_gradient}. Suppose that the corresponding dual distribution $\dW = \nh \# \dV$ satisfies $LI \succeq \nabla^2 W \succeq 0$. Then, applying SMLD 
with constant step-size $\beta^t = \beta$ yields\footnote{Our guarantee is given on a randomly chosen iterate from $\{\x^1, \x^2, ..., \x^T\}$, instead of the final iterate $\x^T$. In practice, we observe that the final iterate always gives the best performance, and we will ignore this minor difference in the theorem statement.}:
\beq \label{eq:smld_rate}
D\l(\x^T \| \dV\r) \leq \sqrt{\frac{2\Wcal^2_2\l( \y^0, \dW\r)  \l( Ld+\sigma^2 \r) }{T}} = O\l(\sqrt{\frac{Ld+\sigma^2}{T}}\r),
\eeq
provided that $\beta \leq \min \left\{  \left[2T \Wcal^2_2\l( \y^0, \dW\r)  \l( Ld+\sigma^2 \r) \right]^{-\frac{1}{2}}, \frac{1}{L} \right\}$.
\end{theorem}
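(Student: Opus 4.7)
The plan is to reduce the convergence analysis in the primal (on the simplex or other constrained domain) to the convergence of a \emph{standard} stochastic gradient Langevin scheme in the unconstrained dual space, then invoke an off-the-shelf KL-convergence bound for SGLD on merely convex smooth potentials.

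First I would apply \textbf{Theorem \ref{thm:convergence_TV}} to collapse the primal guarantee into a dual one: since $h$ is $1$-strongly convex and the iterates are linked by $\x^t=\nhstar(\y^t)$, the theorem gives $D(\x^T \| \X_\infty)=D(\y^T \| \Y_\infty)$, where $\Y_\infty \sim \dW$. Thus it suffices to bound the KL divergence of $\y^T$ to the push-forward target $\dW$. Next I would assemble the properties of the dual problem that make a vanilla SGLD analysis applicable: by hypothesis $\textup{dom}(W)=\RR^d$ and $0 \preceq \nabla^2 W \preceq LI$, so $W$ is convex and $L$-smooth on all of $\RR^d$; by \textbf{Lemma \ref{lem:unbiasedness}}, the mini-batch update in line 4 of \textbf{Algorithm \ref{alg:SMLD}} corresponds to $\tilde\nabla W=\frac{N}{b}\sum_{i\in B}\nabla W_i$, which is unbiased for $\nabla W$ and has variance no larger than $\sigma^2=\EE\|\tilde\nabla V-\nabla V\|^2$.

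Therefore the $\y$-iteration is exactly standard stochastic-gradient Langevin Monte Carlo, with convex $L$-smooth potential $W$ and unbiased gradient noise of variance at most $Ld+\sigma^2$ (the $Ld$ term coming from the Gaussian injection interacting with $L$-smoothness, in the familiar one-step Wasserstein/KL decomposition). I would then invoke the non-asymptotic bound for SGLD on convex smooth potentials (e.g.\ Durmus--Majewski--Miasojedow or Dalalyan--Karagulyan), which, for constant step-size $\beta \leq 1/L$ and averaged/uniformly sampled iterate, yields a bound of the form
\begin{equation}\nn
D(\y^T \| \dW) \;\leq\; \frac{\Wcal_2^2(\y^0,\dW)}{2\beta T}\;+\;\beta\,(Ld+\sigma^2).
\end{equation}
Balancing the two terms by choosing $\beta=\bigl[2T\Wcal_2^2(\y^0,\dW)(Ld+\sigma^2)\bigr]^{-1/2}$ (which is feasible under $\beta\leq 1/L$ by the stated step-size constraint) gives exactly \eqref{eq:smld_rate}, and combined with the first step yields the claim for $\x^T$.

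The main obstacle I anticipate is the merely-convex, not strongly convex, regime: SGLD convergence bounds are cleanest when $\nabla^2 W\succeq m I>0$, but here we only have $\nabla^2 W\succeq 0$, so one must use the ``convex optimization'' style analysis (convexity + one-step Wasserstein contraction via coupling, plus a variance-of-drift term) rather than a contractive coupling. A related subtlety is justifying that the stochastic-gradient variance truly enters only additively as $\sigma^2$ in the discretization error; this is where \textbf{Lemma \ref{lem:unbiasedness}}(4) is essential, since it lets us upper bound the dual variance by the primal variance without having to compute $W_i$ explicitly. Finally, the footnote in the statement flags a technical gap — the bound is really for a uniformly-sampled iterate, not the last iterate — and I would mirror the paper in noting this without attempting to close it.
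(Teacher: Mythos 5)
Your proposal is correct and follows essentially the same route as the paper: reduce to the dual via \textbf{Theorem \ref{thm:convergence_TV}}, use \textbf{Lemma \ref{lem:unbiasedness}} to get an unbiased dual stochastic gradient with variance at most $\sigma^2$, and then invoke the convex-analysis SGLD bound of Durmus--Majewski--Miasojedow (the paper cites their Corollary~18 with $D^2=\sigma^2$, $M_2=0$, which is the $\epsilon$--$T$ form of the additive bound you wrote) before optimizing the step size. The only cosmetic difference is that you balance the two terms of the explicit bound while the paper solves the admissible $(\beta,T,\epsilon)$ conditions for $T$; these yield the same rate.
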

\begin{proof}
See \textbf{Appendix \ref{app:dual_convergence}}.
\end{proof}

\begin{example}[SMLD for Dirichlet Posteriors]\label{exp:diri_smld}
\emph{For the case of Dirichlet posteriors, we have seen in \eqref{eq:dirichlet_dual} that the corresponding dual distribution satisfies $(N+\Gamma)I \succeq \nabla^2 W \succ 0$, where $N \coloneqq \sum_{\ell=1}^{d+1} n_\ell$ and $\Gamma \coloneqq \sum_{\ell=1}^{d+1} \alpha_\ell$. Furthermore, it is easy to see that the stochastic gradient $\tilde{\nabla}W$ can be efficiently computed (see \textbf{Appendix \ref{app:diri_dual_stochastic}}):
\beq \label{eq:dual_W_i_dirichlet_minibatch}
\tilde{\nabla}W (\y)_\ell \coloneqq \frac{N}{b}\sum_{i\in B} \nabla W_i (\y)_\ell = -  \l(\frac{Nm_\ell}{b} + \alpha_\ell \r)  +  \l(N+ \Gamma\r) \frac{e^{y_\ell}}{1+\sum_{k=1}^d  e^{y_k} }, 
\eeq
where $m_\ell$ is the number of observations of category $\ell$ in the mini-batch $B$. As a result, \textbf{Theorem \ref{thm:dirichlet_provable}} states that SMLD achieves
\beq \nn
D\l(\x^T \| \dV\r) \leq \sqrt{\frac{2\Wcal^2_2\l( \y^0, \dW\r)  \Big( (N+\Gamma)(d+1)+\sigma^2 \Big) }{T}} = O\l(\sqrt{\frac{(N+\Gamma)d + \sigma^2}{T}}\r)
\eeq
with a constant step-size. \hfill $\square$
}
\end{example}\vspace{-3mm}

\section{Experiments}\label{sec:experiment}
We conduct experiments with a two-fold purpose. First, we use a low-dimensional synthetic data, where we can evaluate the total variation error by comparing histograms, to verify the convergence rates in our theory. Second, We demonstrate that the SMLD, modulo a necessary modification for resolving numerical issues, outperforms state-of-the-art first-order methods on the Latent Dirichlet Allocation (LDA) application with Wikipedia corpus.

\subsection{Synthetic Experiment for Dirichlet Posterior}\label{sec:experiment-synthetic}
We implement the deterministic MLD for sampling from an 11-dimensional Dirichlet posterior \eqref{eq:dirichlet_distribution} with $n_1 = \num[group-separator={,}]{10000},$ $n_2 = n_3 = 10$, and $n_4 = n_5 = \cdots = n_{11} = 0$, which aims to capture the sparse nature of real observations in topic modeling. We set $\alpha_\ell = 0.1$ for all $\ell$. 

As a baseline comparison, we include the Stochastic Gradient Riemannian Langevin Dynamics (SGRLD) \cite{patterson2013stochastic} with the expanded-mean parametrization. SGRLD is a tailor-made first-order scheme for simplex constraints, and it remains one of the state-of-the-art algorithms for LDA.
~For fair comparison, we use deterministic gradients for SGRLD.

We perform a grid search over the constant step-size for both algorithms, and we keep the best three for MLD and SGRLD.
~For each iteration, we build an empirical distribution by running \num[group-separator={,}]{2000000} independent trials, and we compute its total variation with respect to the histogram generated by the true distribution.

Figure~\ref{fig:aq1} reports the total variation error along the first dimension, where we can see that MLD outperforms SGRLD by a substantial margin.
~As dictated by our theory, all the MLD curves decay at the $O(T^{-\nicefrac{1}{2}})$ rate until they saturate at the dicretization error level. In contrast, SGRLD lacks non-asymptotic guarantees, and there is no clear convergence rate we can infer from Figure~\ref{fig:aq1}.

The improvement along all other dimensions (i.e., topics with less observations) are even more significant; see \textbf{Appendix \ref{app:experiment_synthetic}}.

\begin{figure}[t]
    \centering
    \subfigure[{\label{fig:aq1}}Synthetic data, first dimension.]{{\includegraphics[keepaspectratio=true,scale=0.22]{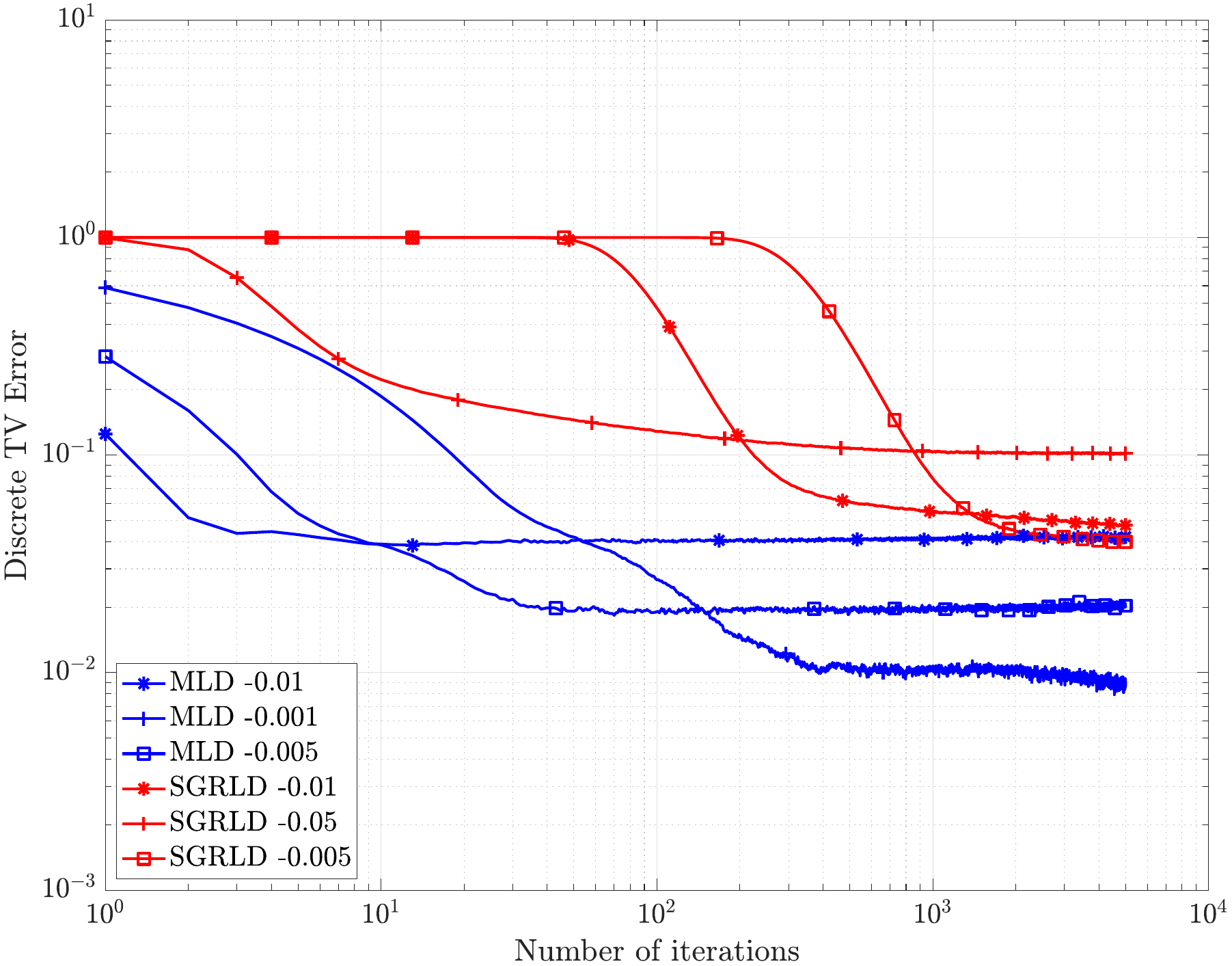} }}%
    \hspace{2mm}
    \subfigure[{\label{fig:bq1}}LDA on Wikipedia corpus.]{{\includegraphics[keepaspectratio=true,scale=0.22]{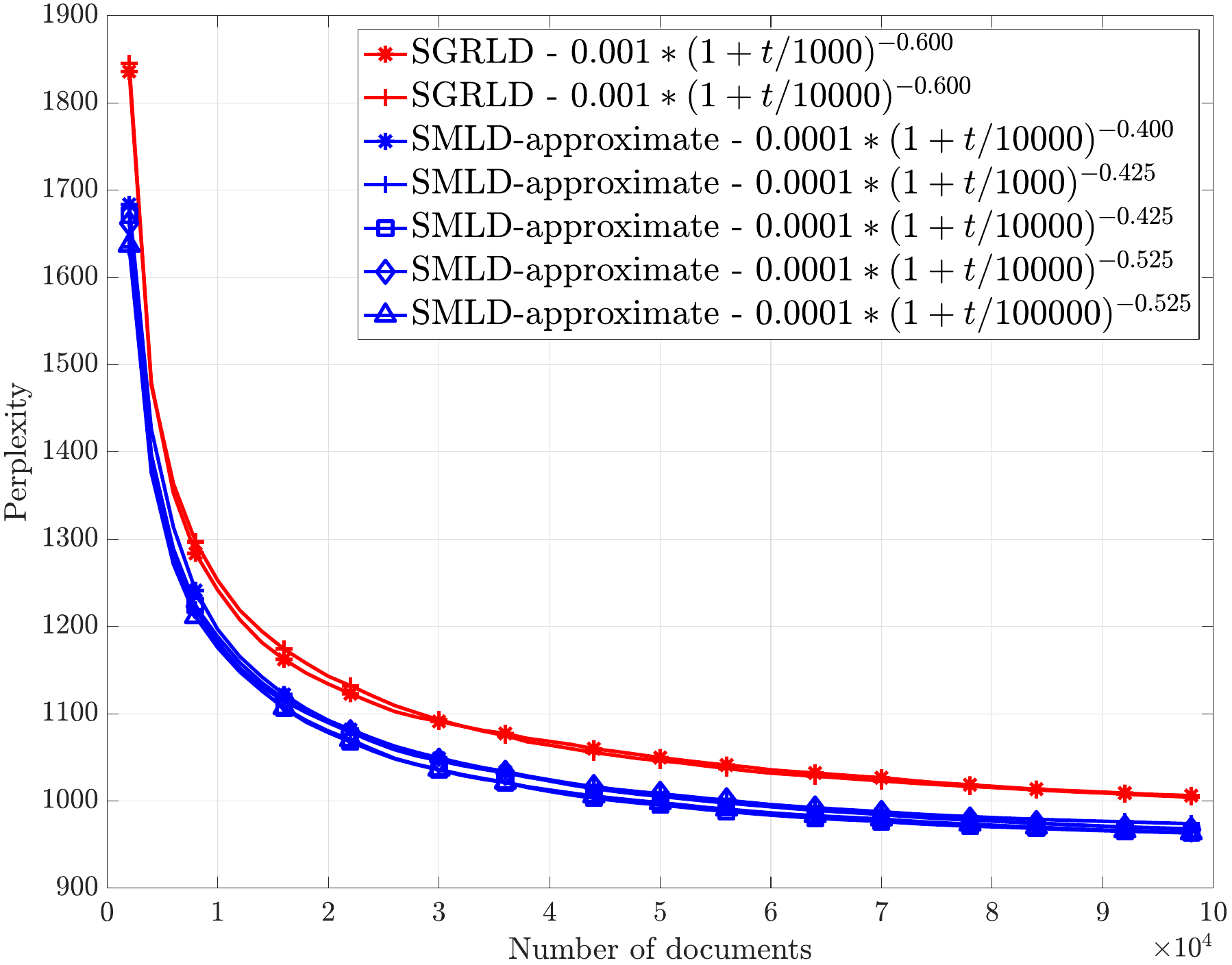} }}%
    \label{fig:adv}\vspace{-8mm}
\end{figure}
\subsection{Latent Dirichlet Allocation with Wikipedia Corpus}\label{sec:experiment-LDA}
An influential framework for topic modeling is the Latent Dirichlet Allocation (LDA) \cite{blei2003latent}, which, given a text collection, requires to infer the posterior word distributions without knowing the exact topic for each word. The full model description is standard but somewhat convoluted; we refer to the classic \cite{blei2003latent} for details.

Each topic $k$ in LDA determines a word distribution $\bm\pi_k$, and suppose there are in total $K$ topics and $W+1$ words. The variable of interest is therefore $\bm\pi \coloneqq (\bm\pi_1, \bm\pi_2, ..., \bm\pi_K) \in \Delta_{W} \times \Delta_{W}  \times \cdots \Delta_{W} $. Since this domain is a Cartesian product of simplices, we propose to use $\tilde{h}(\bm\pi) \coloneqq \sum_{k=1}^K h(\bm\pi_k)$, where $h$ is the entropic mirror map \eqref{eq:entropic_mirror}, for SMLD. It is easy to see that all of our computations for Dirichlet posteriors generalize to this setting.

\subsubsection{Experimental Setup}\label{sec:experiment-LDA_setup}
We implement the SMLD for LDA on the Wikipedia corpus with \num[group-separator={,}]{100000} documents, and we compare the performance against the SGRLD \cite{patterson2013stochastic}.
~In order to keep the comparison fair, we adopt exactly the same setting as in \cite{patterson2013stochastic}, including the model parameters, the batch-size, the Gibbs sampler steps, etc. See Section 4 and 5 in \cite{patterson2013stochastic} for omitted details.

Another state-of-the-art first-order algorithm for LDA is the SGRHMC in \cite{ma2015complete}, for which we skip the implementation, due to not knowing how the $\hat{B}_t$ was chosen in \cite{ma2015complete}. Instead, we will repeat the same experimental setting as \cite{ma2015complete} and directly compare our results versus the ones reported in \cite{ma2015complete}. See \textbf{Appendix \ref{app:experiment_SGRHMC}} for comparison against SGRHMC.

\subsubsection{A Numerical Trick and the SMLD-approximate Algorithm}
A major drawback of the SMLD in practice is that the stochastic gradients \eqref{eq:dual_W_i_dirichlet_minibatch} involve exponential functions, which are unstable for large-scale problems. For instance, in python, \texttt{np.exp(800) = inf}, whereas the relevant variable regime in this experiment extends to 1600. To resolve such numerical issues, we appeal to the linear approximation\footnote{One can also use a higher-order Taylor approximation for $\exp(\y)$, or add a small threshold $\exp(\y) \simeq \max\{\epsilon, 1+\y \}$ to prevent the iterates from going to the boundary. In practice, we observe that these variants do not make a huge impact on the performance.} $\exp(\y) \simeq \max\{0, 1+\y \}$. Admittedly, our theory no longer holds under such numerical tricks, and we shall not claim that our algorithm is provably convergent for LDA. Instead, the contribution of MLD here is to identify the dual dynamics associated with \eqref{eq:dirichlet_dual}, which would have been otherwise difficult to perceive. We name the resulting algorithm ``SMLD-approximate'' to indicate its heuristic nature.


\subsubsection{Results}\label{sec:experiment-LDA_results}
Figure~\ref{fig:bq1} reports the perplexity on the test data up to \num[group-separator={,}]{100000} documents, with the five best step-sizes we found via grid search for SMLD-approximate. For SGRLD, we use the best step-sizes reported in \cite{patterson2013stochastic}. 

From the figure, we can see a clear improvement, both in terms of convergence speed and the saturation level, of the SMLD-approximate over SGRLD. One plausible explanation for such phenomenon is that our MLD, as a simple unconstrained Langevin Dynamics, is less sensitive to discretization. On the other hand, the underlying dynamics for SGRLD is a more sophisticated Riemannian diffusion, which requires finer discretization than MLD to achieve the same level of approximation to the original continuous-time dynamics, and this is true even in the presence of noisy gradients and our numerical heuristics

\subsubsection*{Acknowledgments}

This project has received funding from the European Research Council (ERC) under the European Union's Horizon 2020 research and innovation programme (grant agreement n$^\circ$ 725594 - time-data).

%

\bibliographystyle{plain}
\bibliography{biblio}

\begin{thebibliography}{10}

\bibitem{ahn2012bayesian}
Sungjin Ahn, Anoop Korattikara, and Max Welling.
\newblock Bayesian posterior sampling via stochastic gradient fisher scoring.
\newblock In {\em Proceedings of the 29th International Coference on
  International Conference on Machine Learning}, pages 1771--1778, 2012.

\bibitem{beck2003mirror}
Amir Beck and Marc Teboulle.
\newblock Mirror descent and nonlinear projected subgradient methods for convex
  optimization.
\newblock {\em Operations Research Letters}, 31(3):167--175, 2003.

\bibitem{blei2003latent}
David~M Blei, Andrew~Y Ng, and Michael~I Jordan.
\newblock Latent dirichlet allocation.
\newblock {\em Journal of machine Learning research}, 3(Jan):993--1022, 2003.

\bibitem{brenier1987decomposition}
Yann Brenier.
\newblock D{\'e}composition polaire et r{\'e}arrangement monotone des champs de
  vecteurs.
\newblock {\em CR Acad. Sci. Paris S{\'e}r. I Math}, 305(19):805--808, 1987.

\bibitem{brenier1991polar}
Yann Brenier.
\newblock Polar factorization and monotone rearrangement of vector-valued
  functions.
\newblock {\em Communications on pure and applied mathematics}, 44(4):375--417,
  1991.

\bibitem{brosse17sampling}
Nicolas Brosse, Alain Durmus, \'Eric Moulines, and Marcelo Pereyra.
\newblock Sampling from a log-concave distribution with compact support with
  proximal langevin monte carlo.
\newblock In {\em Proceedings of the 2017 Conference on Learning Theory},
  volume~65 of {\em Proceedings of Machine Learning Research}, pages 319--342.
  PMLR, 07--10 Jul 2017.

\bibitem{bubeck2015sampling}
S{\'e}bastien Bubeck, Ronen Eldan, and Joseph Lehec.
\newblock Sampling from a log-concave distribution with projected langevin
  monte carlo.
\newblock {\em arXiv preprint arXiv:1507.02564}, 2015.

\bibitem{caffarelli1990localization}
Luis~A Caffarelli.
\newblock A localization property of viscosity solutions to the monge-ampere
  equation and their strict convexity.
\newblock {\em Annals of Mathematics}, 131(1):129--134, 1990.

\bibitem{caffarelli1992regularity}
Luis~A Caffarelli.
\newblock The regularity of mappings with a convex potential.
\newblock {\em Journal of the American Mathematical Society}, 5(1):99--104,
  1992.

\bibitem{caffarelli2000monotonicity}
Luis~A Caffarelli.
\newblock Monotonicity properties of optimal transportation and the fkg and
  related inequalities.
\newblock {\em Communications in Mathematical Physics}, 214(3):547--563, 2000.

\bibitem{chen2015convergence}
Changyou Chen, Nan Ding, and Lawrence Carin.
\newblock On the convergence of stochastic gradient mcmc algorithms with
  high-order integrators.
\newblock In {\em Advances in Neural Information Processing Systems}, pages
  2278--2286, 2015.

\bibitem{chen2014stochastic}
Tianqi Chen, Emily Fox, and Carlos Guestrin.
\newblock Stochastic gradient hamiltonian monte carlo.
\newblock In {\em International Conference on Machine Learning}, pages
  1683--1691, 2014.

\bibitem{cheng2017convergence}
Xiang Cheng and Peter Bartlett.
\newblock Convergence of langevin mcmc in kl-divergence.
\newblock In {\em Proceedings of Algorithmic Learning Theory}, volume~83 of
  {\em Proceedings of Machine Learning Research}, pages 186--211. PMLR, 07--09
  Apr 2018.

\bibitem{cheng2017underdamped}
Xiang Cheng, Niladri~S Chatterji, Peter~L Bartlett, and Michael~I Jordan.
\newblock Underdamped langevin mcmc: A non-asymptotic analysis.
\newblock {\em arXiv preprint arXiv:1707.03663}, 2017.

\bibitem{dalalyan2017theoretical}
Arnak~S Dalalyan.
\newblock Theoretical guarantees for approximate sampling from smooth and
  log-concave densities.
\newblock {\em Journal of the Royal Statistical Society: Series B (Statistical
  Methodology)}, 79(3):651--676, 2017.

\bibitem{dalalyan2017user}
Arnak~S Dalalyan and Avetik~G Karagulyan.
\newblock User-friendly guarantees for the langevin monte carlo with inaccurate
  gradient.
\newblock {\em arXiv preprint arXiv:1710.00095}, 2017.

\bibitem{de2014monge}
Guido De~Philippis and Alessio Figalli.
\newblock The monge--amp{\`e}re equation and its link to optimal
  transportation.
\newblock {\em Bulletin of the American Mathematical Society}, 51(4):527--580,
  2014.

\bibitem{ding2014bayesian}
Nan Ding, Youhan Fang, Ryan Babbush, Changyou Chen, Robert~D Skeel, and Hartmut
  Neven.
\newblock Bayesian sampling using stochastic gradient thermostats.
\newblock In {\em Advances in neural information processing systems}, pages
  3203--3211, 2014.

\bibitem{durmus2018analysis}
Alain Durmus, Szymon Majewski, and Bla{\.z}ej Miasojedow.
\newblock Analysis of langevin monte carlo via convex optimization.
\newblock {\em arXiv preprint arXiv:1802.09188}, 2018.

\bibitem{durmus2016stochastic}
Alain Durmus, Umut Simsekli, Eric Moulines, Roland Badeau, and Ga{\"e}l
  Richard.
\newblock Stochastic gradient richardson-romberg markov chain monte carlo.
\newblock In {\em Advances in Neural Information Processing Systems}, pages
  2047--2055, 2016.

\bibitem{dwivedi2018log}
Raaz Dwivedi, Yuansi Chen, Martin~J Wainwright, and Bin Yu.
\newblock Log-concave sampling: Metropolis-hastings algorithms are fast!
\newblock {\em arXiv preprint arXiv:1801.02309}, 2018.

\bibitem{frigyik2010introduction}
Bela~A Frigyik, Amol Kapila, and Maya~R Gupta.
\newblock Introduction to the dirichlet distribution and related processes.
\newblock {\em Department of Electrical Engineering, University of Washignton,
  UWEETR-2010-0006}, 2010.

\bibitem{kolesnikov2011mass}
Alexander~V Kolesnikov.
\newblock Mass transportation and contractions.
\newblock {\em arXiv preprint arXiv:1103.1479}, 2011.

\bibitem{krichene2017acceleration}
Walid Krichene and Peter~L Bartlett.
\newblock Acceleration and averaging in stochastic descent dynamics.
\newblock In {\em Advances in Neural Information Processing Systems}, pages
  6799--6809, 2017.

\bibitem{lan2016sampling}
Shiwei Lan and Babak Shahbaba.
\newblock Sampling constrained probability distributions using spherical
  augmentation.
\newblock In {\em Algorithmic Advances in Riemannian Geometry and
  Applications}, pages 25--71. Springer, 2016.

\bibitem{liu2016stochastic}
Chang Liu, Jun Zhu, and Yang Song.
\newblock Stochastic gradient geodesic mcmc methods.
\newblock In {\em Advances in Neural Information Processing Systems}, pages
  3009--3017, 2016.

\bibitem{luu2017sampling}
Tung Luu, Jalal Fadili, and Christophe Chesneau.
\newblock Sampling from non-smooth distribution through langevin diffusion.
\newblock 2017.

\bibitem{ma2015complete}
Yi-An Ma, Tianqi Chen, and Emily Fox.
\newblock A complete recipe for stochastic gradient mcmc.
\newblock In {\em Advances in Neural Information Processing Systems}, pages
  2917--2925, 2015.

\bibitem{mandelbrot1983fractal}
Benoit~B Mandelbrot.
\newblock {\em The fractal geometry of nature}, volume 173.
\newblock WH freeman New York, 1983.

\bibitem{mccann1995existence}
Robert~J McCann.
\newblock Existence and uniqueness of monotone measure-preserving maps.
\newblock {\em Duke Mathematical Journal}, 80(2):309--324, 1995.

\bibitem{mertikopoulos2018convergence}
Panayotis Mertikopoulos and Mathias Staudigl.
\newblock On the convergence of gradient-like flows with noisy gradient input.
\newblock {\em SIAM Journal on Optimization}, 28(1):163--197, 2018.

\bibitem{nemirovsky1983problem}
AS~Nemirovsky and DB~Yudin.
\newblock Problem complexity and method efficiency in optimization.
\newblock 1983.

\bibitem{patterson2013stochastic}
Sam Patterson and Yee~Whye Teh.
\newblock Stochastic gradient riemannian langevin dynamics on the probability
  simplex.
\newblock In {\em Advances in Neural Information Processing Systems}, pages
  3102--3110, 2013.

\bibitem{raginsky2012continuous}
Maxim Raginsky and Jake Bouvrie.
\newblock Continuous-time stochastic mirror descent on a network: Variance
  reduction, consensus, convergence.
\newblock In {\em Decision and Control (CDC), 2012 IEEE 51st Annual Conference
  on}, pages 6793--6800. IEEE, 2012.

\bibitem{rockafellar2015convex}
Ralph~Tyrell Rockafellar.
\newblock {\em Convex analysis}.
\newblock Princeton university press, 1970.

\bibitem{simsekli2016stochastic}
Umut Simsekli, Roland Badeau, Taylan Cemgil, and Ga{\"e}l Richard.
\newblock Stochastic quasi-newton langevin monte carlo.
\newblock In {\em International Conference on Machine Learning}, pages
  642--651, 2016.

\bibitem{villani2003topics}
C{\'e}dric Villani.
\newblock {\em Topics in optimal transportation}.
\newblock Number~58. American Mathematical Soc., 2003.

\bibitem{villani2008optimal}
C{\'e}dric Villani.
\newblock {\em Optimal transport: old and new}, volume 338.
\newblock Springer Science \& Business Media, 2008.

\bibitem{welling2011bayesian}
Max Welling and Yee~W Teh.
\newblock Bayesian learning via stochastic gradient langevin dynamics.
\newblock In {\em Proceedings of the 28th International Conference on Machine
  Learning (ICML-11)}, pages 681--688, 2011.

\bibitem{xu2018accelerated}
Pan Xu, Tianhao Wang, and Quanquan Gu.
\newblock Accelerated stochastic mirror descent: From continuous-time dynamics
  to discrete-time algorithms.
\newblock In {\em International Conference on Artificial Intelligence and
  Statistics}, pages 1087--1096, 2018.

\end{thebibliography}


\begin{thebibliography}{1}

\bibitem{ma2015complete}
Yi-An Ma, Tianqi Chen, and Emily Fox.
\newblock A complete recipe for stochastic gradient mcmc.
\newblock In {\em Advances in Neural Information Processing Systems}, pages
  2917--2925, 2015.

\end{thebibliography}

\newpage

\appendix
\numberwithin{equation}{section}

\section{Proof of \textbf{Theorem \ref{thm:convergence_TV}}} \label{app:proof_amld_TV}
We first focus on the convergence for total variation and relative entropy, since they are in fact quite trivial. The proof for the 2-Wasserstein distance requires a bit more work.

\subsection{Total Variation and Relative Entropy}
Since $h$ is strictly convex, $\nh$ is one-to-one, and hence 
\begin{align*}
d_{\textup{TV}}(\nabla h\# \mu_1, \nabla h\# \mu_2)
&= \frac{1}{2}\sup_{E} | \nabla h\# \mu_1(E) - \nabla h\# \mu_2(E)|  \\
&= \frac{1}{2}\sup_{E} \left| \mu_1\big( \nh^{-1}(E) \big) - \mu_2 \big(\nh^{-1}(E)\big) \right| \\
&=d_{\textup{TV}}(\mu_1, \mu_2).
\end{align*}

On the other hand, it is well-known that applying a one-to-one mapping to distributions leaves the relative entropy intact. Alternatively, we may also simply write (letting $\nu_i = \nh \# \mu_i$):
\begin{align*}
D(\nu_1 \|  \nu_2) &= \int \log \frac{\dnu_1}{\dnu_2} \dnu_1 \\
&=  \int \log  \l(\frac{\dnu_1}{\dnu_2}\circ \nh \r) \dmu_1 && \text{by \eqref{eq:push_forward} below}\\
&= \int \log  \frac{\dmu_1}{\dmu_2}  \dmu_1  && \text{by \eqref{eq:Monge-Ampere}}\\
&= D(\mu_1 \|  \mu_2)
\end{align*}

The ``in particular'' part follows from noticing that $\y^t \sim \nh \# \x^t$ and $\Y_\infty \sim \nh \#\X_\infty$.

\subsection{2-Wasserstein Distance}
Now, let $h$ be $\rho$-strongly convex. The most important ingredient of the proof is \textbf{Lemma \ref{lem:duality_wasserstein}} below, which is conceptually clean. Unfortunately, for the sake of rigor, we must deal with certain intricate regularity issues in the Optimal Transport theory. If the reader wishes, she/he can simply assume that the quantities \eqref{eq:primal_bregman_wasserstein} and \eqref{eq:dual_bregman_wasserstein} below are well-defined, which is always satisfied by any practical mirror map, and skip all the technical part about the well-definedness proof.

For the moment, assume $h \in \mathcal{C}^5$; the general case is given at the end. Every convex $h$ generates a Bregman divergence via $B_h (\x, \x') \coloneqq h(\x) - h(\x') - \ip{\nh (\x')}{\x-\x'}$. 
~The following key lemma allows us to relate guarantees in $\Wcal_2$ between $\x^t$'s and $\y^t$'s. It can be seen as a generalization of the classical duality relation \eqref{eq:bregman_fenchel} in the space of probability measures.
\begin{lemma}[Duality of Wasserstein Distances]\label{lem:duality_wasserstein}
Let $\mu_1$, $\mu_2$ be probability measures satisfying \textbf{Assumptions \ref{ass:distributions_moment}} and \textbf{\ref{ass:distributions_hausdorff}}. If $h$ is $\rho$-strongly convex and $\mathcal{C}^5$, then the \eqref{eq:primal_bregman_wasserstein} and \eqref{eq:dual_bregman_wasserstein} below are well-defined:  
\beq \label{eq:primal_bregman_wasserstein}
\Wcal_{B_h}(\mu_1, \mu_2) \coloneqq \inf_{T: T\#\mu_1 = \mu_2} \int B_h \l(\x, T(\x) \r) \drm \mu_1(\x)
\eeq
and (notice the exchange of inputs on the right-hand side)
\beq \label{eq:dual_bregman_wasserstein}
\Wcal_{B_\hstar}(\nu_1, \nu_2) \coloneqq \inf_{T: T\#\nu_1 = \nu_2} \int B_\hstar \l(T(\y), \y \r) \drm \nu_1(\y).
\eeq
Furthermore, we have
\beq \label{eq:duality_wasserstein}
\Wcal_{B_h}(\mu_1, \mu_2) = \Wcal_{B_\hstar}(\nabla h \#\mu_1, \nabla h \#\mu_2). 
\eeq
\end{lemma}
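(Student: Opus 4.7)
The plan is to prove the identity $\Wcal_{B_h}(\mu_1,\mu_2)=\Wcal_{B_\hstar}(\nh\#\mu_1,\nh\#\mu_2)$ by exhibiting a cost-preserving bijection between admissible transport maps on the two sides; once this bijection is established the infima must agree. The pointwise backbone is the classical Bregman duality identity
\[ B_h(\x,\x') \;=\; B_{\hstar}\bigl(\nh(\x'),\,\nh(\x)\bigr), \]
which follows from applying the Fenchel--Young equality $h(\z)+\hstar(\nh(\z))=\langle \z,\nh(\z)\rangle$ at $\z=\x$ and $\z=\x'$ and using $\nhstar\circ\nh=\id$ (valid under Assumption~\ref{ass:mirror_map}). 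Notice the crucial asymmetry: the arguments of $B_{\hstar}$ are in the opposite order, which is exactly why \eqref{eq:dual_bregman_wasserstein} is formulated with $T(\y)$ in the first slot.

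First I would fix an admissible $T$ with $T\#\mu_1=\mu_2$ and set $\tilde{T}\coloneqq \nh\circ T\circ\nhstar$. Writing $\y=\nh(\x)$, Theorem~\ref{thm:coupling} and functoriality of push-forward give $\tilde{T}\#\nu_1=(\nh\circ T)\#\mu_1=\nh\#\mu_2=\nu_2$, so $\tilde{T}$ is admissible for \eqref{eq:dual_bregman_wasserstein}. Conversely, $\tilde{T}\mapsto \nhstar\circ\tilde{T}\circ\nh$ sends admissible dual maps to admissible primal maps, and these two operations invert each other because $\nh$ and $\nhstar$ are mutually inverse $\mathcal{C}^1$ diffeomorphisms. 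Applying the push-forward change-of-variables formula and the Bregman duality identity yields
\[ \int B_h(\x,T(\x))\,\drm\mu_1(\x) \;=\; \int B_{\hstar}\bigl(\nh(T(\x)),\nh(\x)\bigr)\,\drm\mu_1(\x) \;=\; \int B_{\hstar}\bigl(\tilde{T}(\y),\y\bigr)\,\drm\nu_1(\y), \]
so the cost is invariant under the bijection; taking infima on both sides produces \eqref{eq:duality_wasserstein}.

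The main obstacle is the well-definedness claim: we must show that the two infima are taken over nonempty sets and that the integrands are measurable and not identically $+\infty$. Existence of at least one admissible transport map follows from McCann's theorem applied to $\mu_1,\mu_2$ (and separately to $\nu_1,\nu_2$) under Assumptions~\ref{ass:distributions_moment}--\ref{ass:distributions_hausdorff}; the map in question is the Brenier map, which is measurable, and its image under $\nh$ (or $\nhstar$) is measurable because $h,\hstar\in\mathcal{C}^5$. For finiteness, $\rho$-strong convexity of $h$ with $h\in\mathcal{C}^5$ yields a local quadratic upper bound on $B_h$, which combined with finite second moments (Assumption~\ref{ass:distributions_moment}) and the Brenier map controls the primal cost; the analogous bound for $B_{\hstar}$, using that strong convexity of $h$ translates to Lipschitz gradient of $\hstar$, handles the dual cost. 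Finally, I would relax the $\mathcal{C}^5$ requirement back to the $\mathcal{C}^2$ setting of Assumption~\ref{ass:mirror_map} by a standard mollification argument: approximate $h$ by a sequence $h_n\in\mathcal{C}^5$ preserving $\rho$-strong convexity, apply the identity to each $h_n$, and pass to the limit using lower-semicontinuity of Wasserstein-type infima with respect to uniform convergence of the cost on bounded sets.
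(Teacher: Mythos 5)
Your proof of the identity \eqref{eq:duality_wasserstein} is correct and is essentially the paper's own argument: the pointwise duality $B_h(\x,\x')=B_{\hstar}(\nh(\x'),\nh(\x))$, the change of variables $\int f\circ\nh\,\drm\mu=\int f\,\drm(\nh\#\mu)$, and the cost-preserving bijection $T\mapsto\nh\circ T\circ\nhstar$ between admissible maps (the paper writes the same bijection in the reverse direction, as $T\mapsto\nhstar\circ T\circ\nh$ acting on dual maps). Where you genuinely diverge is the well-definedness claim. You read ``well-defined'' as ``the admissible set is nonempty, the integrand is measurable, and the cost is finite,'' which you obtain from McCann's theorem plus the quadratic bound $B_{\hstar}(\y,\y')\le\frac{1}{2\rho}\|\y-\y'\|^2$; note that for the dual problem nonemptiness is better deduced from the primal one through your bijection than by applying McCann to $\nu_1,\nu_2$ directly, since Assumptions \ref{ass:distributions_moment}--\ref{ass:distributions_hausdorff} are imposed on the primal measures and $\nh$ can distort second moments. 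The paper instead proves the stronger statement that the Monge problem with cost $B_{\hstar}$ is actually solvable, by verifying the twist conditions \textbf{(C0)}--\textbf{(C2)} of Theorem 3.6 in \cite{de2014monge}; this is the sole reason the lemma assumes $h\in\mathcal{C}^5$ (so that $B_{\hstar}\in\mathcal{C}^4$). Your argument never uses the $\mathcal{C}^5$ hypothesis, which signals that you are establishing a weaker well-posedness property than the one intended---although your weaker version is all that the downstream Wasserstein bound in Theorem \ref{thm:convergence_TV} actually requires, since only the sandwich inequalities and the equality of infima are used there. Finally, your closing mollification paragraph is both unnecessary (the lemma is stated only for $\mathcal{C}^5$ mirror maps) and the one step that would demand real work: lower semicontinuity of Monge-type infima under perturbation of the cost is delicate and you give no argument for it. The paper sidesteps this entirely by treating the $\mathcal{C}^2$ case of Theorem \ref{thm:convergence_TV} with a separate direct estimate on the quadratic cost via \eqref{eq:bregman_fenchel}, never extending the lemma itself beyond $\mathcal{C}^5$.
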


\vspace{8mm}

Before proving the lemma, let us see that the relation in $\Wcal_2$ is a simple corollary of \textbf{Lemma \ref{lem:duality_wasserstein}}. Since $h$ is $\rho$-strongly convex, it is classical that, for any $\x$ and $\x'$,
\begin{align}\label{eq:bregman_fenchel}
\frac{\rho}{2} \| \x - \x'\|^2 \leq B_h(\x, \x' ) = B_\hstar(\nh(\x'), \nh(\x))  \leq \frac{1}{2\rho}\| \nh(\x) - \nh(\x')\|^2.
\end{align}
Using \textbf{Lemma \ref{lem:duality_wasserstein}} and the fact that $\y^t \sim \nh \# \x^t$ and $\Y_\infty \sim \nh\# \X_\infty$, we conclude $\Wcal_2(\x^t, \X_\infty) \leq \frac{1}{\rho}\Wcal_2(\y^t, \X_\infty)$. It hence remains to prove \textbf{Lemma \ref{lem:duality_wasserstein}} when $h \in \mathcal{C}^5$.

\subsubsection{Proof of \textbf{Lemma \ref{lem:duality_wasserstein}} When $h \in \mathcal{C}^5$} \label{app:h_C5}
We first prove that \eqref{eq:dual_bregman_wasserstein} is well-defined by verifying the sufficient conditions in \textbf{Theorem 3.6} of \cite{de2014monge}. Specifically, we will verify \textbf{(C0)}-\textbf{(C2)} in p.554 of \cite{de2014monge} when the transport cost is $B_\hstar$.

Since $h$ is $\rho$-strongly convex, $\nh$ is injective, and hence $\nhstar = (\nh)^{-1}$ is also injective, which implies that $\hstar$ is strictly convex. On the other hand, the strong convexity of $h$ implies $\nabla^2 \hstar \preceq \frac{1}{\rho}I$, and hence $B_\hstar$ is globally upper bounded by a quadratic function. 

We now show that the conditions \textbf{(C0)}-\textbf{(C2)} are satisfied. Since we have assumed $h\in \mathcal{C}^5$, we have $B_\hstar \in \mathcal{C}^4$. Since $B_\hstar$ is upper bounded by a quadratic function, the condition \textbf{(C0)} is trivially satisfied. On the other hand, since $\hstar$ is strictly convex, simple calculation reveals that, for any $\y'$, the mapping $\y \rightarrow \nabla_{\y'}B_{\hstar}(\y, \y')$ is injective, which is \textbf{(C1)}. Similarly, for any $\y$, the mapping $\y' \rightarrow \nabla_{\y}B_{\hstar}(\y, \y')$ is also injective, which is \textbf{(C2)}. By  \textbf{Theorem 3.6} in \cite{de2014monge}, \eqref{eq:dual_bregman_wasserstein} is well-defined.

We now turn to \eqref{eq:duality_wasserstein}, which will automatically establish the well-definedness of \eqref{eq:primal_bregman_wasserstein}. We first need the following equivalent characterization of $\nh \# \mu = \nu$ \cite{villani2008optimal}:
\beq \label{eq:push_forward}
\int f \dnu = \int f\circ \nh  \dmu
\eeq
for all measurable $f$. Using \eqref{eq:push_forward} in the definition of $\Wcal_{B_\hstar}$, we get
\begin{align*} 
\Wcal_{B_\hstar}(\nabla h \#\mu_1, \nabla h \#\mu_2) &= \inf_{T} \int B_\hstar \l(T(\y), \y \r) \drm \nabla h\# \mu_1(\y) \\
&= \inf_{T} \int B_\hstar \Big( (T\circ\nabla h)(\x), \nabla h(\x) \Big) \drm  \mu_1(\x),
\end{align*}where the infimum is over all $T$ such that $T\# (\nabla h\# \mu_1) = \nabla h\# \mu_2$. Using the classical duality $B_h(\x, \x') = B_\hstar(\nabla h(\x'), \nabla h(\x))$ and $\nabla h \circ \nabla \hstar (\x)= \x$, we may further write
\begin{align} \label{eq:hold5}
\Wcal_{B_\hstar}(\nabla h \#\mu_1, \nabla h \#\mu_2) &= \inf_{T} \int B_h \Big( \x, (\nhstar \circ T \circ \nh ) (\x) \Big) \drm \mu_1(\x) 
\end{align}where the infimum is again over all $T$ such that $T\# (\nabla h\# \mu_1) = \nabla h\# \mu_2$. In view of \eqref{eq:hold5}, the proof would be complete if we can show that $T\# (\nabla h\# \mu_1) = \nabla h\# \mu_2$ if and only if $(\nhstar \circ T \circ \nh ) \# \mu_1 = \mu_2$. 

For any two maps $T_1$ and $T_2$, we claim that 
\beq
(T_1 \circ T_2) \# \mu = T_1\#\l(  T_2 \#\mu \r). \label{eq:equivalence_push_forward}
\eeq
Indeed, for any Borel set $E$, we have, by definition of the push-forward,
\begin{align*}
(T_1 \circ T_2) \# \mu (E) &= \mu \big( (T_1 \circ T_2)^{-1} (E) \big) \\
&= \mu \big( (T_2^{-1} \circ T_1^{-1}) (E) \big).
\end{align*}On the other hand, recursively applying the definition of push-forward to $T_1\#\l(  T_2 \#\mu \r)$ gives
\begin{align*}
T_1\#\l(  T_2 \#\mu \r)(E) &=  T_2 \#\mu \big( T^{-1}(E) \big)  \\
&= \mu \big(  (T_2^{-1} \circ T_1^{-1}) (E) \big)
\end{align*}which establishes \eqref{eq:equivalence_push_forward}. 

Assume that $T \#(\nh \# \mu_1) = \nh \# \mu_2$. Then we have
\begin{align*}
(\nhstar \circ T \circ \nh ) \# \mu_1 &= \nhstar \# (T \#(\nh \# \mu_1)) && \textup{by \eqref{eq:equivalence_push_forward}} \\
&= \nhstar \# (\nh \#\mu_2) && \textup{since $T \#(\nh \# \mu_1) = \nh \# \mu_2$} \\
&= (\nhstar \circ \nh)\#\mu_2 && \textup{by \eqref{eq:equivalence_push_forward} again} \\
&= \mu_2.
\end{align*}
On the other hand, if $(\nhstar \circ T \circ \nh ) \# \mu_1 = \mu_2$, then composing both sides by $\nh$ and using \eqref{eq:equivalence_push_forward} yields $T\# (\nabla h\# \mu_1) = \nabla h\# \mu_2$, which finishes the proof.

\subsubsection{When $h$ is only $\mathcal{C}^2$}

When $h$ is only $\mathcal{C}^2$, we will directly resort to \eqref{eq:bregman_fenchel}. Let $T$ be any map such that $T\# (\nabla h\# \mu_1) = \nabla h\# \mu_2$, and consider the optimal transportation problem $\inf_{T} \int \|  \y- T(\y) \|^2 \drm \nh\#\mu_1(\y)$. By \eqref{eq:bregman_fenchel} and \eqref{eq:push_forward}, we have
\begin{align}
\inf_{T} \int \|  \y- T(\y) \|^2 \drm \nh\#\mu_1(\y) &= \inf_{T} \int \|  \nh(\x)- (T\circ\nh)(\x)) \|^2 \drm \mu_1(\x) \nn \\
&\geq \rho^2 \inf_{T} \int \|  \x- (\nhstar \circ T\circ\nh)(\x)) \|^2 \drm \mu_1(\x) \nn
\end{align}where the infimum is over all $T$ such that $T\# (\nabla h\# \mu_1) = \nabla h\# \mu_2$. But as proven in \textbf{Appendix \ref{app:h_C5}}, this is equivalent to $(\nhstar \circ T\circ\nh) \# \mu_1 = \mu_2$. The proof is finished by noting $\y^t \sim \nh \# \x^t$ and $\Y_\infty \sim \nh \#\X_\infty$.



\section{More Examples of Mirror Map and their Dual Distributions}\label{app:mirror_maps}

In this section, we present more instances of mirror map other than on the simplex, and their corresponding dual distributions.

\subsection{Mirror map on the hypercube}

On the hypercube $[-1,1]^d$, a possible mirror map is \[h(\x) = \frac{1}{2} \sum_{i=1}^d \Big((1+x_i)\log(1+x_i) + (1-x_i)\log(1-x_i)\Big).\]

It can easily be shown that
\[
\frac{\partial h}{\partial x_i} = \text{arctanh}(x_i), \ \ \ \ \ \ \ \frac{\partial^2h}{\partial x_i \partial x_j} = \frac{\delta_{ij}}{1 - x_i^2},  \ \ \ \ \ \ \  \frac{\partial h^*}{\partial y_i} = \tanh(y_i)
\]
which implies that $h$ is $1$-strongly convex.

Since the Hessian matrix of $h$ is diagonal, we have:
\[
\log \det \nabla^2 h(\x) = \sum_{i=1}^d \log\left(\frac{1}{1-x_i^2}\right).
\]
Then, using the definition of $W$, we obtain
\begin{align*}
W(\y) &= V \circ \nabla h^*(\y) + \sum_{i=1}^d \log\left(\frac{1}{1-\tanh^2(y_i)}\right) \\
&= V \circ \nabla h^*(\y) + \sum_{i=1}^d \log\left(\frac{4}{\exp(2y_i) + 2 + \exp(-2y_i)}\right) \\
&= V \circ \nabla h^*(\y) + \sum_{i=1}^d \log\left(\frac{1}{2}(1 + \cosh(2y_i))\right).
\end{align*}

\subsection{Mirror map on the Euclidean ball}

On the unit ball $\{\x \in \mathbb{R}^d : \|\x\| \leq 1\}$, where $\|\cdot\|$ denote the Euclidean norm, a possible mirror map is
\[
h(\x) = -\log(1-\|\x\|)  - \|\x\|.
\]

We can compute:
\[
\frac{\partial h}{\partial x_i} = \frac{x_i}{1 - \|\x\|}, \ \ \ \ \ \ \ \frac{\partial^2h}{\partial x_i \partial x_j} = \frac{\delta_{ij}}{1 - \|\x\|} + \frac{x_ix_j}{\|\x\|(1-\|\x\|)^2},  \ \ \ \ \ \ \  \frac{\partial h^*}{\partial y_i} = \frac{y_i}{1 + \|\y\|}.
\]

The Hessian matrix can thus be written as $\nabla^2 h(\x) = \frac{1}{1-\|\x\|} {I} + \frac{1}{\|\x\|(1 - \|\x\|)^2} \x \x^T$, where ${I}$ is the identity matrix. Invoking the matrix determinant lemma, we get
\[
\det(\nabla^2 h(\x)) = \left(1 + \frac{\x^\top \x}{\|\x\|(1-\|\x\|))}\right) \det\left(\frac{1}{1 - \|\x\|} {I}\right) = \left(\frac{1}{1 - \|\x\|}\right)^{d+1}.
\]

We thus obtain:
\begin{align*}
W(\y) &= V \circ \nabla h^*(\y) - (d+1)\log \left(1 - \frac{\|\y\|}{1 + \|\y\|} \right) \\
&= V \circ \nabla h^*(\y) + (d+1) \log\left(1 + \|\y\|\right).
\end{align*}

\section{Proof of \textbf{Thereom \ref{thm:existence_good_mirror_map}}}\label{app:proof_existence}
In previous sections, we are given a target distribution $e^{-V}$ and a mirror map $h$, and we derive the induced distribution $e^{-W}$ through the Monge-Amp\`ere equation \eqref{eq:Monge-Ampere}. The high-level idea of this proof is to reverse the direction: We start with two good distributions $e^{-V}$ and $e^{-W}$, and we invoke deep results in Optimal Transport to deduce the existence of a good mirror map $h$.

First, notice that if $V$ has bounded domain, then the strong convexity of $V$ implies $V > -\infty$. Along with the assumption that $V$ is bounded away from $+\infty$ in the interior, we see that $e^{-V}$ is bounded away from $0$ and $+\infty$ in the interior of support.

Let $\drm\nu (\x) = e^{-W(\x)} \drm \x$ be any distribution such that $\nabla^2 W \succeq I$. By Brenier's polarization theorem \citep{brenier1987decomposition, brenier1991polar} and \textbf{Assumption \ref{ass:distributions_moment}}, \textbf{\ref{ass:distributions_hausdorff}}, there exists a convex function $\hstar$ whose gradient solves the $\Wcal_2\l(\nu, \mu \r)$ optimal transportation problem. Caffarelli's regularity theorem \citep{caffarelli1990localization, caffarelli1992regularity, caffarelli2000monotonicity} then implies that the Brenier's map $h^\star$ is in $\mathcal{C}^2$. Finally, a slightly stronger form of Caffarelli's contraction theorem \citep{kolesnikov2011mass} asserts:
\beq \label{eq:hold_sec_asymme6}
\nabla^2 \hstar \preceq \frac{1}{m} I,
\eeq
which implies $h= (\hstar)^\star$ is $m$-strongly convex.
 
Let us consider the discretized MLD \eqref{eq:asymmetric_mirror_dynamics_discrete} corresponding to the mirror map $h$. Invoking \textbf{Theorem 3} of \cite{cheng2017convergence}, the convergence rate of the discretized Langevin dynamics $\y^T$ for $\mu$ is such that $D(\y^T \| \nu) = \tilde{O}\l(   \nicefrac{d}{T} \r)$, which in turn implies $\Wcal_2( \y^T, \nu) = \tilde{O}\l(  \sqrt{ \nicefrac{d}{{T}} } \r)$ and $\dTV( \y^T,  \nu) = \tilde{O}\l(  \sqrt{ \nicefrac{d}{{T}} } \r).$ \textbf{Theorem \ref{thm:convergence_TV}} then completes the proof.

%

\section{Proof of \textbf{Lemma \ref{lem:W_simplex}}} \label{app:W_simplex}
Straightforward calculations in convex analysis shows
\begin{align}
&\frac{\partial h}{\partial x_i}= \log \frac{x_i}{x_{d+1}}, \quad \quad  \quad \quad\frac{\partial^2 h}{\partial x_i \partial x_j} =\delta_{ij} x_i^{-1} + x_{d+1}^{-1}, \nonumber \\
&\hstar(\y) = \log\l( 1+  \sum_{i=1}^d e^{y_i}  \r), \quad \frac{\partial \hstar}{\partial y_i} = \frac{e^{y_i} }{1+\sum_{i=1}^de^{y_i}   }, \label{eq:formulas_entropy}
\end{align}
which proves that $h$ is 1-strongly convex.

Let $\mu = \dV$ be the target distribution and define $\nu = \dW \coloneqq \nabla h\# \mu$. By \eqref{eq:Monge-Ampere}, we have 
\begin{align}
W\circ \nh &= V + \log\det \nabla^2 h. \label{eq:MAhold}
\end{align}
Since $\nabla^2 h (\x) = \textup{diag}[x^{-1}_i] + x_{d+1}^{-1} \mathbbm{1}\mathbbm{1}^\top$ where $\mathbbm{1}$ is the all 1 vector, the well-known matrix determinant lemma ``$\det (A + \mathbf{u} \mathbf{v}^\top) = (1+ \mathbf{v}^\top A^{-1} \mathbf{u})\det A$'' gives
\begin{align}
\log\det\nabla^2 h(\x) &= \log \l( 1+   x_{d+1}^{-1}\sum_{i=1}^d x_i \r) \cdot \prod_{i=1}^d x^{-1}_i \nn\\
&= -\sum_{i=1}^{d+1} \log x_i = -\sum_{i=1}^{d} \log x_i  - \log \l(1- \sum_{i=1}^d x_i \r). \label{eq:section_amld_hold2}
\end{align}Composing both sides of \eqref{eq:MAhold} with $\nhstar$ and using \eqref{eq:formulas_entropy}, \eqref{eq:section_amld_hold2}, we then finish the proof by computing
\begin{align}
W(\y) &= V\circ \nhstar (\y)- \sum_{i=1}^d{y_i} + (d+1) \log \l(  1+ \sum_{i=1}^d e^{y_i} \r) \nonumber \\
&= V\circ \nhstar (\y)- \sum_{i=1}^d{y_i} + (d+1) \hstar (\y). \nn
\end{align}

\section{Proof of \textbf{Lemma \ref{lem:unbiasedness}}}\label{app:dual_decomposability}
The proof relies on rather straightforward computations.

\begin{enumerate}
\item In order to show $e^{-(W+C)} = \nabla h \# e^{-V}$ for some constant $C$, we will verify the Monge-Amp\`ere equation:
\begin{equation}\label{eq:app_proof_unbiased_hold0}
e^{-V} = e^{-(W\circ \nabla h +C)}\text{det}\nabla ^2 h
\end{equation}for $V = \sum_{i=1}^NV_i$ and $W = \sum_{i=1}^N W_i$, where $W_i$ is defined via \eqref{eq:dual_stochastic_gradient}. By \eqref{eq:dual_stochastic_gradient}, it holds that
\beq \label{eq:app_proof_unbiased_hold1}
\frac{1}{C_i}e^{-NV_i} = e^{-NW_i\circ \nabla h}\det\nabla^2 h, \quad C_i \coloneqq \frac{1}{\int e^{-NV_i}}.
\eeq
Multiplying \eqref{eq:app_proof_unbiased_hold1} for $i = 1, 2, ..., N$, we get
\beq \label{eq:app_proof_unbiased_hold2}
\prod_{i=1}^N \frac{1}{C_i} e^{-NV} = e^{-NW\circ \nh} \l(\det \nabla^2 h \r)^N.
\eeq
The first claim follows by taking the $N$\ts{th} root of \eqref{eq:app_proof_unbiased_hold2}.

\item The second claim directly follows by \eqref{eq:app_proof_unbiased_hold1}.

\item Trivial.

\item By \eqref{eq:app_proof_unbiased_hold0} and \eqref{eq:app_proof_unbiased_hold1} and using $\nabla \hstar \circ \nabla h(\x) = \x$, we get
\begin{align}
W_i &=  V_i \circ \nabla \hstar +  \frac{1}{N}   \log\det\nabla^2 h (\nabla \hstar) - \log C_i, \\
W &= V \circ \nhstar + \log\det\nabla^2 h (\nabla \hstar) - C,
\end{align}
which implies $N\nabla W_i  - \nabla W=   \nabla^2 \hstar \l( N\nabla V_i \circ \nabla \hstar  - \nabla V \circ \nabla \hstar\r)$. Since $h$ is 1-strongly convex, $\hstar$ is 1-Lipschitz gradient, and therefore the spectral norm of $\nabla^2 \hstar$ is upper bounded by 1. In the case of $b=1$, the final claim follows by noticing
\begin{align}
\EE \|  \tilde{\nabla}W - \nabla W   \|^2 &= \frac{1}{N}\sum_{i=1}^N \|  N\nabla W_i - \nabla W  \|^2 \\
&= \frac{1}{N}\sum_{i=1}^N  \|   \nabla^2 \hstar \l( N\nabla V_i \circ \nabla \hstar  - \nabla V \circ \nabla \hstar\r)  \|^2 \\
&\leq  \frac{\|  \nabla^2 \hstar \|_{\textup{spec}}^2}{N} \sum_{i=1}^N  \|N\nabla V_i \circ \nabla \hstar  - \nabla V \circ \nabla \hstar \|^2 \\
&\leq \EE \|  \tilde{\nabla}V - \nabla V   \|^2.
\end{align}The proof for general batch-size $b$ is exactly the same, albeit with more cumbersome notation.
\end{enumerate}

\section{Proof of \textbf{Theorem \ref{thm:dirichlet_provable}}}\label{app:dual_convergence}
The proof is a simple combination of the existing result in \cite{durmus2018analysis} and our theory in Section \ref{sec:amld}.

By \textbf{Theorem \ref{thm:convergence_TV}}, we only need to prove that the inequality \eqref{eq:smld_rate} holds for $D(\tilde{\y}^T\| \dW)$, where $\tilde{\y}^T$ is to be defined below. By assumption, $W$ is unconstrained and satisfies $LI \succeq \nabla^2 W \succeq 0$. By \textbf{Lemma~\ref{lem:unbiasedness}}, the stochastic gradient $\tilde{\nabla}W$ is unbiased and satisfies 
\beq \nn
\EE \| \tilde{\nabla}W - \nabla W\|^2 \leq \EE \| \tilde{\nabla}V - \nabla V\|^2 = \sigma^2.
\eeq

Pick a random index\footnote{The analysis in \cite{durmus2018analysis} provides guarantees on the probability measure $\nu_T \coloneqq \frac{1}{N}\sum_{t=1}^T \nu_t$ where $\y^t \sim \nu_t$. The $\tilde{\y}^T$ defined here has law $\nu_T$.} $t\in \{ 1, 2, ..., T\}$ and set $\tilde{\y}^T \coloneqq \y^t$. Then \textbf{Corollary 18} of~\cite{durmus2018analysis} with $D^2 = \sigma^2$ and $M_2 = 0$ implies $D(\tilde{\y}^T \| \dW) \leq \epsilon$, provided 
\beq
\beta \leq \min \left\{   \frac{\epsilon}{2\l(  Ld + \sigma^2   \r)}, \frac{1}{L}  \right\}, \quad T \geq \frac{\Wcal_2^2(\y^0, \dW)}{\beta\epsilon}.
\eeq
Solving for $T$ in terms of $\epsilon$ establishes the theorem.

\section{Stochastic Gradients for Dirichlet Posteriors}\label{app:diri_dual_stochastic}
In order to apply SMLD, one must have, for each term $V_i$, the corresponding dual $W_i$ defined via \eqref{eq:dual_stochastic_gradient}.
In this appendix, we derive a closed-form expression in the case of the Dirichlet posterior \eqref{eq:dirichlet_distribution}. 

Recall that the Dirichlet posterior \eqref{eq:dirichlet_distribution} consists of a Dirichlet prior and categorical data observations \cite{frigyik2010introduction}. Let $N \coloneqq \sum_{\ell=1}^{d+1} n_\ell$, where $n_\ell$ is the number of observations for category $\ell$, and suppose that the parameters $\alpha_\ell$'s are given. If the i\ts{th} data is in category $c_i \in \{1, 2, ..., d+1\}$, then we can define $V_i(\x) \coloneqq - \sum_{\ell=1}^{d+1} \mathbb{I}_{\{\ell = c_i\}}\log x_\ell - \frac{1}{N}\sum_{\ell=1}^{d+1} (\alpha_\ell - 1) \log x_\ell$ so that \textbf{Assumption \ref{ass:primal_decomposability}} holds. In view of \textbf{Lemma \ref{lem:W_simplex}}, The corresponding dual $W_i$ is, up to a constant, given by 
\beq \label{eq:dual_W_i_dirichlet}
W_i (\y) = - \sum_{\ell=1}^d \mathbb{I}_{ \{\ell=c_i\} } y_\ell -  \sum_{\ell=1}^d \frac{\alpha_\ell}{N} y_\ell + \hstar + \l(\sum_{\ell=1}^{d+1} \frac{\alpha_\ell}{N} \r) \hstar(\y).
\eeq
Similarly, if we take a mini-batch $B$ of the data with $|B| = b$, then 
\beq \label{eq:dual_W_i_dirichlet_minibatchh}
\frac{N}{b}\tilde{W} (\y) \coloneqq \frac{N}{b}\sum_{i\in B} W_i (\y) = - \sum_{\ell=1}^d \l(\frac{Nm_\ell}{b} + \alpha_\ell \r) y_\ell +  \l(N+ \sum_{\ell=1}^{d+1}{\alpha_\ell}\r) \hstar(\y), 
\eeq
where $m_\ell$ is the number of observations of category $\ell$ in the set $B$. Apparently, the gradient of \eqref{eq:dual_W_i_dirichlet_minibatchh} is \eqref{eq:dual_W_i_dirichlet_minibatch}.

\section{More on Experiments}

\subsection{Synthetic Data}\label{app:experiment_synthetic}
Figure~\ref{fig:app_1a} reports the total variation error along the 8\textsuperscript{th} dimension of the synthetic experiment in Section \ref{sec:experiment-synthetic}. Compared to Figure~\ref{fig:aq1} in the main text, it is evident that MLD achieves an even stronger performance than SGRLD, especially in the saturation error phase.

\subsection{Comparison against SGRHMC for Latent Dirichlet Allocation}\label{app:experiment_SGRHMC}
The only difference between the experimental setting of \cite{ma2015complete} and the main text is the number of topics (50 vs. 100). In this appendix, we run SMLD-approximate under the setting of \cite{ma2015complete} and directly compare against the results reported in \cite{ma2015complete}. We have also included the SGRLD as a baseline.

Figure~\ref{fig:app_1b} reports the perplexity on the test data. According to \cite{ma2015complete}, the best perplexity achieved by SGRHMC up to \num[group-separator={,}]{10000} documents is approximately \num[group-separator={,}]{1400}, which is worse than the 1323 by SMLD-approximate. Moreover, from Figure 3 of \cite{ma2015complete}, we see that the SGRHMC yields comparable performance as SGRLD for 2 out 3 independent runs, especially in the beginning phase, whereas the SMLD-approximate has sizeable lead over SGRLD at any stage of the experiment. The potential reason for this improvement is, similar to SGRLD, that the SGRHMC exploits the Riemannian Hamiltonian dynamics, which is more complicated than MLD and hence more sensitive to the discretization error.

\begin{figure}[t]
    \centering
    \subfigure[{\label{fig:app_1a}}Synthetic data, 8\textsuperscript{th} dimension.]{{\includegraphics[keepaspectratio=true,scale=0.23]{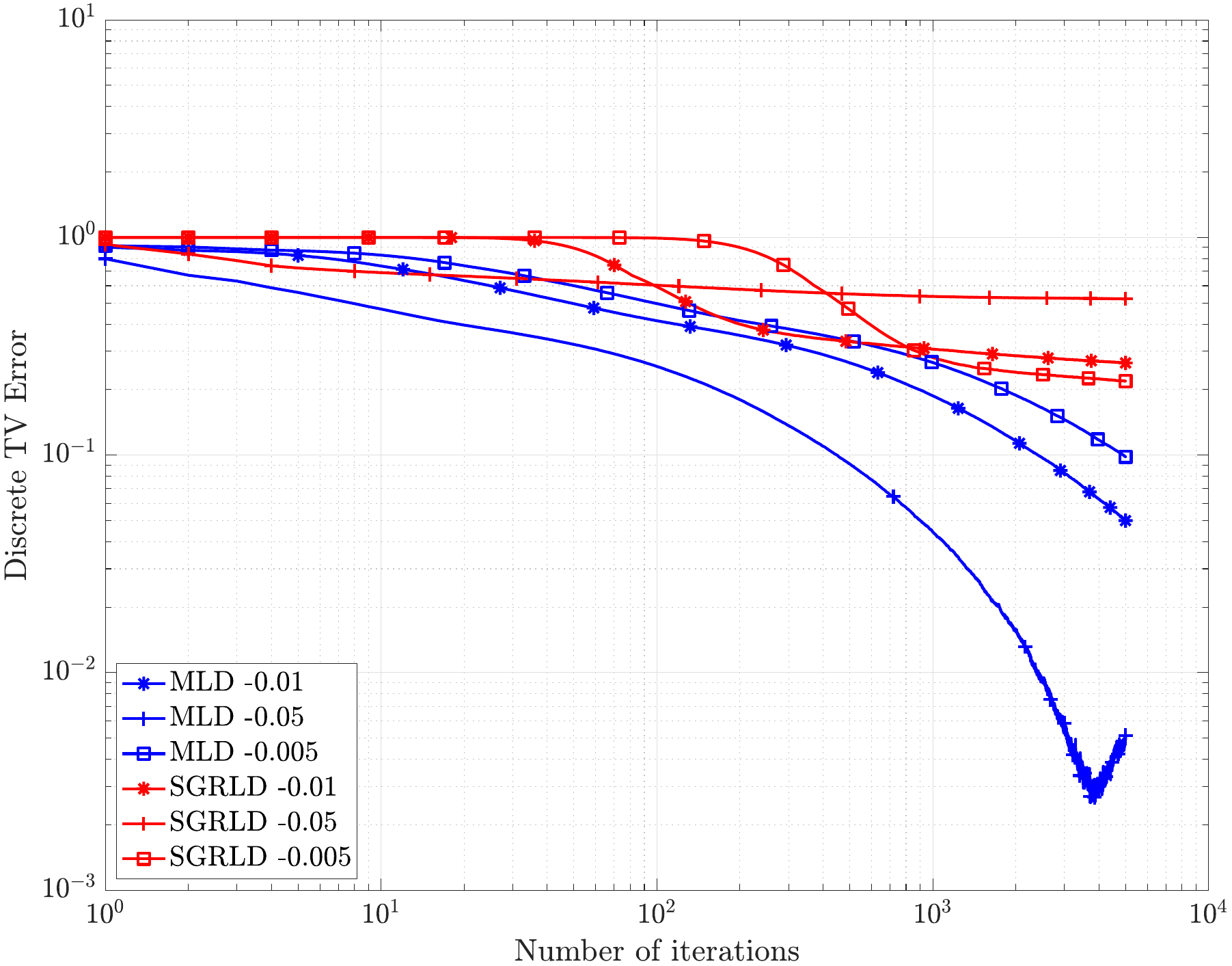} }}%
    \subfigure[{\label{fig:app_1b}}LDA on Wikipedia corpus.]{{\includegraphics[keepaspectratio=true,scale=0.23]{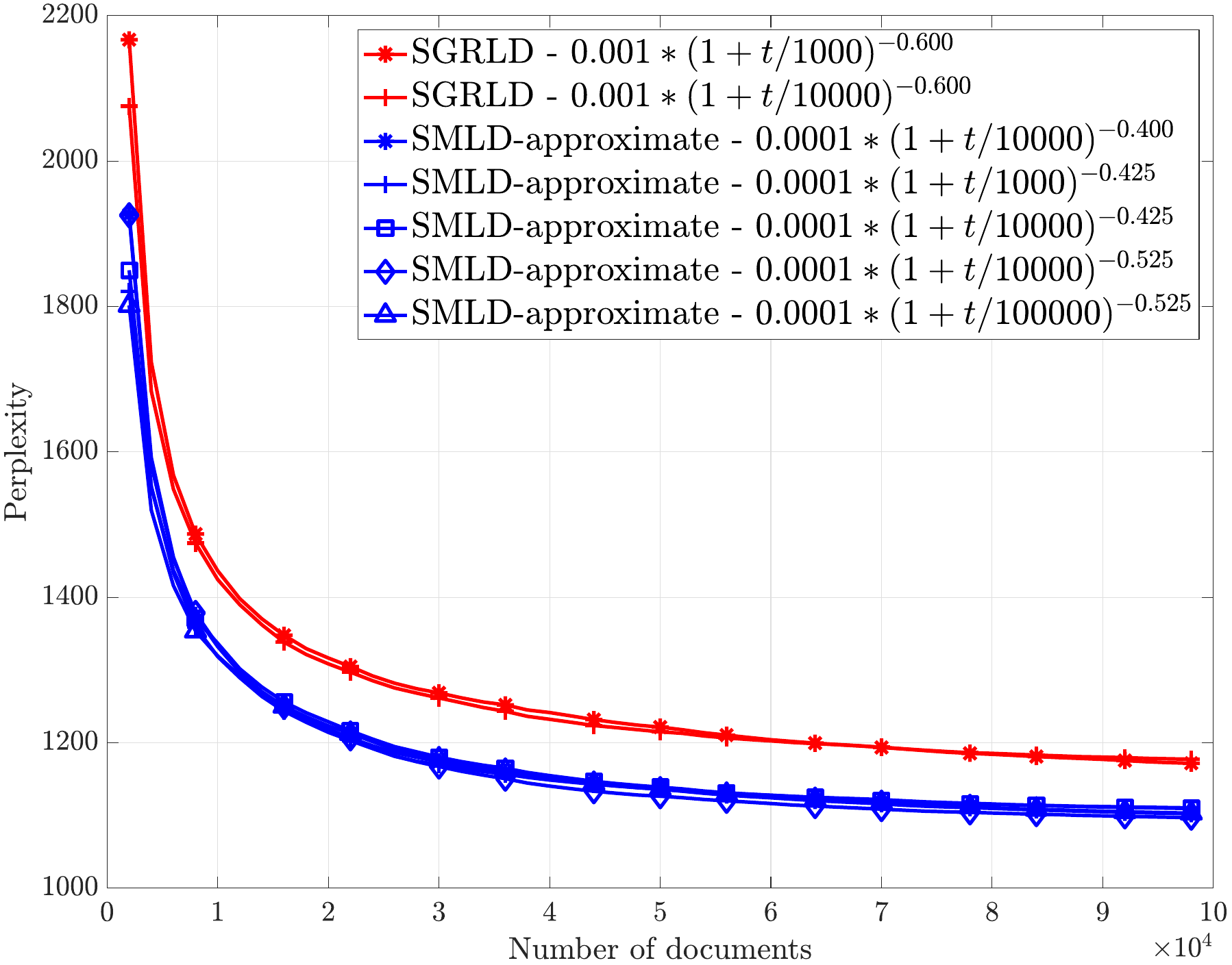} }}%
    \caption{LDA for Wikipedia, $50$ topics.}\
\end{figure}



%

\end{document}